\documentclass{article}
 
\usepackage{iclr2027_conference,times}

\usepackage{hyperref}
\usepackage{url}

\title{Overcoming Scaling Limits in On-Policy Self-Distillation for LLM Reasoning}
\author{Md Ismail Hossain \\
North South University \\
\And  
Humaira Kousar \\
KAIST \\
\And
Isidora Chara Tourni \\
Andria Labs \\
\thanks{Correspondence: \texttt{ismail.hossain2018@northsouth.edu}, 
\texttt{humairakousar32@kaist.ac.kr}, \texttt{isidora@andrialabs.ai}.}}

\usepackage{multirow}
\usepackage{booktabs}
\usepackage{multirow}
\usepackage{graphicx}
\usepackage{wrapfig}
\usepackage[table]{xcolor}
\usepackage{amsmath}
\usepackage{amsthm}
\usepackage{amssymb}
\usepackage{algorithm}
\usepackage{algpseudocode}
\usepackage{enumitem} 
\usepackage{tcolorbox}
\usepackage{xcolor}

\newtheorem{proposition}{Proposition}
 
\newtheorem{proposition*}{Proposition}
 
\usepackage{tcolorbox}

\tcbset{
  abstractbox/.style={
    colback=gray!5,       % Very light gray background
    colframe=gray!50,     % Subtle border color
    arc=2mm,              % Slightly rounded corners
    boxrule=0.5pt,        % Border thickness
    left=12pt, right=12pt, top=10pt, bottom=10pt % Inner padding
  }
}

\newtcolorbox{takeawaybox}[1][]{
  colback=gray!6!white,
  colframe=gray!60!black,
  fonttitle=\bfseries,
  title=Section Takeaways,
  arc=2mm,
  boxrule=0.8pt,
  left=3mm, right=3mm, top=2mm, bottom=2mm,
  #1
}
 
\definecolor{scaffoldblue}{HTML}{0B4F9C}
\definecolor{contextorange}{HTML}{E4572E}
\definecolor{headergray}{gray}{0.90}
\definecolor{headergray}{HTML}{F2F4F7}
\definecolor{ourshighlight}{HTML}{E8F1F5}
\definecolor{diffhighlight}{HTML}{FDF2E9}
\definecolor{bestcolor}{HTML}{0D47A1}
\definecolor{diffcolor}{HTML}{C0392B}

\iclrfinalcopy 
\begin{document}

\maketitle

\begin{abstract}
 
\begin{tcolorbox}[]
\small
\vspace{1mm}

On-policy self-distillation (OPSD) trains a student to match a privileged teacher distribution along its own sampled trajectory. Standard OPSD applies this supervision to unverified student rollouts while conditioning the teacher on privileged context, typically a reference solution. We separate these roles in a factorial analysis and find that scaffold correctness has a stronger effect on downstream accuracy than context correctness. Unverified scaffolds create an imitation gap because the teacher can use information unavailable to the student. This gap shrinks with model scale, yet OPSD continues to supervise mostly unverified trajectories. In contrast, verified scaffolds remain effective even when the teacher is conditioned on the student’s own unsuccessful rollout. Based on this finding, we introduce OASIS, which retains the OPSD objective but supervises mostly verified by label on-policy trajectories and replaces written solutions with unverified model-generated attempts as the teacher context. OASIS therefore requires only final-answer labels. Across Qwen3-1.7B, 4B, and 8B on AIME 2024, AIME 2025, and HMMT 2025, OASIS improves over the base model by 3.2--3.8 points on average, while OPSD's gain falls from 3.05 points at 1.7B to 0.14 at 8B. At 8B, OASIS improves over OPSD by 3.05 points, showing that verified on-policy scaffolds preserve the effectiveness of self-distillation as models scale.

\end{tcolorbox}
 
\end{abstract}

\section{Introduction}
\label{sec:intro}

On-policy self-distillation (OPSD) provides an efficient approach to improving mathematical reasoning in language models \citep{zhao2026selfdistilled}. In OPSD, a single model is used in two conditioning settings. The student receives only the problem and samples a solution, while the teacher is additionally given a reference solution and evaluates the student-generated trajectory at the token level. The student is then trained to match the teacher's token distributions along its own sampled trajectory. Since the supervision is applied to trajectories generated by the student itself, OPSD reduces the distribution mismatch associated with sequence-level distillation \citep{hinton2015distilling, kim2016sequence}, while retaining the dense supervision of token-level distillation. It is also more token-efficient than reinforcement learning for mathematical reasoning \citep{shao2024deepseekmath, zhao2026selfdistilled} and builds on on-policy distillation \citep{ross2011reduction, gu2024minillm}.

Two aspects of OPSD remain less understood. First, it requires a worked solution for each training problem, which limits its applicability to datasets that provide only verifiable answers. Second, it is not yet clear what aspects of the teacher signal are most important for learning. Recent analyses have approached the latter question from the teacher side. \citet{shen2026purified} decompose the teacher signal into a component induced by the reference solution and a component that depends only on the question, while \citet{ichihara2026privileged} show that a reference solution written for a different problem can still provide useful supervision. The specific reference solution may therefore matter less than the standard formulation implies \citep[see also][]{zelikman2022star, wang2022self}. This leaves a complementary question largely unexplored: does it matter \emph{where} along the student's trajectory the supervision is applied?
 
\begin{figure}[t]
    \centering
    \includegraphics[width=\textwidth]{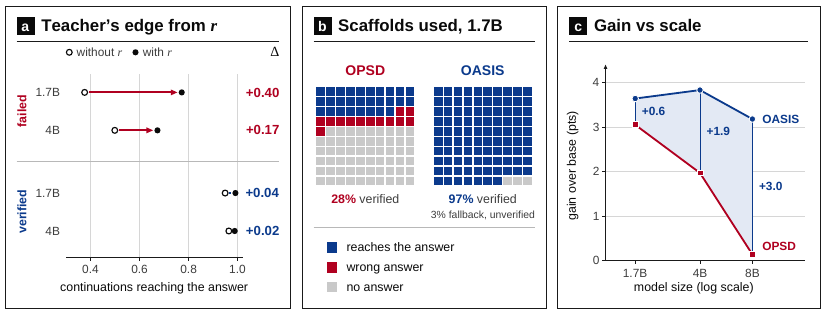}
    \vspace{-0.5cm}
    \caption{
    \textbf{The teacher's advantage is concentrated on failed trajectories and decreases with model scale.}
    \textbf{(a)} Fraction of continuations reaching the correct answer from prefixes of student rollouts, with and without the gold reference $r$. \textbf{(b)} Scaffolds supervised at 1.7B: only 28\% of OPSD's reach a verified answer, and most of the rest are truncated. \textbf{(c)} Average gain over the base model.
    }
    \label{fig:teaser}
\end{figure}

We investigate this question with a recovery experiment: we truncate a student rollout and let the model continue from the same prefix under different contexts. On rollouts that end in a wrong answer, the gold-context teacher recovers the correct answer in 77\% of continuations versus 37\% for the student, while unrelated or empty references give no improvement. On rollouts that already end correctly, the gap is small (99\% versus 95\%). The teacher's advantage is therefore concentrated on failed trajectories and relies on information unavailable to the student at test time (Figure~\ref{fig:teaser}a, Table~\ref{tab:recovery}). This advantage shrinks with scale. From 1.7B to 4B, the student's own recovery on failed prefixes rises from 0.37 to 0.50 and the teacher's advantage falls by more than half, yet about 70\% of OPSD updates still fall on trajectories that never reach a correct answer. In our experiments, OPSD improves over the base model by 3.05, 1.98, and 0.14 points at 1.7B, 4B, and 8B.

These observations motivate OASIS (On-policy Alignment via Scaffold-Isolated Supervision), which keeps the OPSD objective unchanged. OASIS samples $K$ rollouts per problem, verifies their final answers \citep{shao2024deepseekmath}, and applies the OPSD loss along the shortest verified rollout. Problems without a verified rollout contribute no signal. Because the teacher signal on verified trajectories is largely insensitive to the context (Figure~\ref{fig:signal}b), OASIS conditions the teacher on a distinct same-problem rollout, typically one of the student's own unverified attempts, and otherwise another verified rollout, requiring only final-answer verification rather than written solutions. Across Qwen3-1.7B, 4B, and 8B, OASIS improves over OPSD by 0.59, 1.86, and 3.05 points on average across AIME 2024, AIME 2025, and HMMT 2025, and its gain over the base model stays stable with scale (Table~\ref{tab:main}, Figure~\ref{fig:teaser}c).

Our main contributions are the following.

\begin{itemize}
\item We characterize the irreducible imitation gap in OPSD when the reference is unavailable to the student (Section~\ref{sec:gap}).

\item We separate two roles that standard OPSD couples: the trajectory scaffold determines the prefixes receiving supervision, while teacher context determines the target distribution at those prefixes. We introduce signal-replay and recovery probes to analyze these roles separately.
 
\item Across Qwen3-1.7B and Qwen3-4B, the probes show that the gold-context recovery advantage is substantially larger on prefixes from non-verified trajectories, whereas teacher targets on outcome-verified trajectories are relatively insensitive to several alternative contexts. 

\item We introduce OASIS, which selects a short outcome-verified scaffold from on-policy-generated candidates and pairs it with a distinct self-generated context. Across three Qwen3 sizes and three competition-mathematics benchmarks, OASIS improves mean Avg@12 over OPSD while replacing written reasoning traces with final-answer labels, at increased rollout-generation cost.

\end{itemize}

\section{Related Work}
\label{sec:related}

\paragraph{On-Policy Self-Distillation and Privileged Context.}
Classical knowledge distillation trains a student against a teacher on a fixed corpus or teacher-generated sequences \citep{hinton2015distilling, kim2016sequence, romero2015fitnets}. On-policy variants instead evaluate the teacher on samples from the student, reducing the distribution mismatch between training and inference \citep{agarwal2024policy, gu2024minillm}. \citet{zhao2026selfdistilled} use a single model as both teacher and student, conditioning only the teacher on a privileged context. \citet{shen2026purified} decompose the teacher signal into reference-induced and question-conditioned components to purify long-chain reasoning supervision, and \citet{ichihara2026privileged} show that a solution to a different problem can still improve the student. Our measurements agree: on verified scaffolds, an empty reference block is about as close to the gold-context teacher as an unrelated solution (Figure~\ref{fig:signal}b). Recent extensions to on-policy self-distillation refine token-level weighting by addressing teacher uncertainty or trajectory dynamics. For instance, EGRSD \citep{ke2026respecting} introduces a teacher-entropy confidence gate to mitigate high-entropy token noise, while DASH \citep{hou2026dash} employs divergence-adaptive supervision horizons to capture path-dependent discrepancy dynamics. These works vary based on what the teacher observes. We instead ask on which states of the student's trajectories its supervision is imitable.

\paragraph{Learning from Privileged Information.}
The setting in which an expert observes information unavailable to the learner has long been studied as learning using privileged information \citep{vapnik2009lupi}. In sequential decision-making, the related imitation gap arises when a policy is trained to imitate an expert with richer observations and is then deployed without that privileged information \citep{weihs2021bridging, swamy2022sequence}. Asymmetric
actor-critic methods similarly allow privileged information during training while restricting the deployed policy to its available observations \citep{pinto2018asymmetric}. OPSD fits this setting: when the teacher's action depends on the unobserved reference, the student is pushed toward a context-averaged target, leaving the irreducible loss of Proposition~\ref{prop:mixture}.

\paragraph{Verification-Based Reasoning Training.}
Outcome verification is widely used to train reasoning models from their own samples. Rejection-sampling fine-tuning retains successful trajectories as training data
\citep{yuan2023scaling}, while STaR and expert iteration iteratively use verified or self-generated solutions to improve the model \citep{zelikman2022star, anthony2017expert}. Process supervision instead evaluates intermediate reasoning steps and can provide finer-grained credit assignment, at the cost of step-level annotations or reward models \citep{uesato2022solving, lightman2024let}.
Reinforcement learning with verifiable rewards uses the same type of outcome signal as a sequence-level reward \citep{schulman2017ppo, shao2024deepseekmath}. In OASIS, the verified trajectory is neither a supervised target nor a scalar reward: verification only decides which on-policy scaffold receives the teacher's token-level distribution. The comparison with GRPO in Section~\ref{sec:main} is complementary, since both use the same verifier.

\section{Diagnosing On-Policy Self-Distillation}
\label{sec:diagnosis}

\subsection{Preliminaries}
\label{sec:setup}

\paragraph{On-Policy Self-Distillation (OPSD).}
Let $x$ denote a problem and $\pi_\theta$ an autoregressive policy. OPSD uses the same model parameters in two conditioning settings. The student receives only the problem and samples a trajectory
\begin{equation}
y = (y_1,\dots,y_L) \sim \pi_\theta(\cdot \mid s(x)),
\end{equation}
where $s(x)$ denotes the student prompt. The teacher receives the same problem together with a privileged context $r$, typically the written gold solution. This context is provided through a teacher prompt $u(x,r)$. At each prefix $y_{<t}$ of the student-generated trajectory, the two branches produce
\begin{equation}
p_t = \pi_\theta^{(T)}\!\left(\cdot \mid s(x), y_{<t}\right),
\qquad
q_t = \pi_{\theta_0}^{(T)}\!\left(\cdot \mid u(x,r), y_{<t}\right),
\label{eq:branches}
\end{equation}
where $\pi^{(T)}$ denotes the next-token distribution at temperature $T$.

The student is trained to match teacher distributions at these prefixes using a forward KL divergence. Following OPSD, each vocabulary contribution is clipped at a constant $\kappa$, and the loss is averaged over supervised tokens in a batch $\mathcal{B}$:
\begin{equation}
\mathcal{L}(\theta) =
\frac{1}{\sum_{(x,y)\in\mathcal{B}} |y|}
\sum_{(x,y)\in\mathcal{B}}
\sum_{t=1}^{|y|}
\sum_{a\in\mathcal{V}}
\min\left(
q_t(a)\log\frac{q_t(a)}{p_t(a)},
\kappa
\right).
\label{eq:opsd}
\end{equation}

We refer to $y$ as the \emph{scaffold} (defining where supervision is applied) and $r$ as the \emph{context} (defining information available to the teacher). Standard OPSD pairs an unverified student scaffold with a gold reference context.
\paragraph{Offline Diagnostic Protocols.}
To analyze OPSD without training overhead, we evaluate two diagnostic setups: (i) a \emph{signal replay test} that replays student and teacher distributions along recorded scaffolds to measure next-token divergence under varied contexts, and (ii) a \emph{truncated continuation benchmark} that cuts student rollouts at fraction $f \in \{0.1, \dots, 0.9\}$ of their length and measures outcome recovery under different teacher contexts (full details in Appendix~\ref{app:probes}).

\subsection{Supervision Allocation and Waste}
\label{sec:where}

Tracking trajectory outcomes across training logs reveals a major efficiency bottleneck. At Qwen3-1.7B and Qwen3-4B, only 28.3\% and 30.6\% of training rollouts, respectively, produce a correct answer, while 62.0\% and 64.4\% reach the token cap without producing an answer (Table~\ref{tab:budget}). Thus, 71.7\% and 69.4\% of rollouts end without a verified answer. Truncated rollouts also contain neither an answer token nor a stop token, so they provide no supervision for completing the solution. Over 100 training steps, rollouts become longer while the fraction that verifies decreases from 0.333 to 0.229 at 1.7B and from 0.394 to 0.218 at 4B (Figure~\ref{fig:dynamics}).

\begin{figure}[t]
\centering
\includegraphics[width=\textwidth]{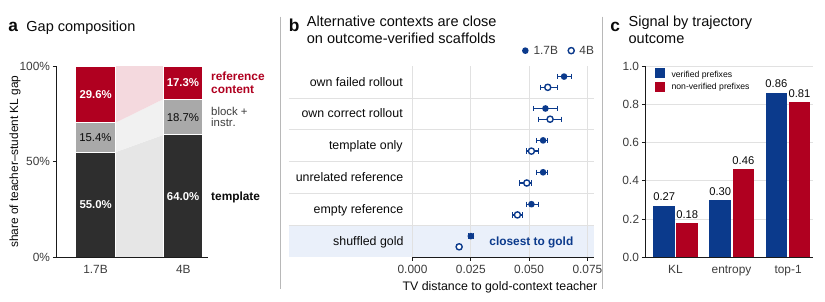}
\vspace{-0.5cm}
\caption{\textbf{Anatomy of the teacher signal.}
\textbf{(a)} Sequential attribution of the teacher--student KL gap.
\textbf{(b)} Total-variation distance from the gold-context teacher for alternative contexts, with 95\% bootstrap intervals.
\textbf{(c)} Teacher statistics on verified and non-verified prefixes.}
\label{fig:signal}
\end{figure}

\subsection{Anatomy of the Teacher Signal}
\label{sec:anatomy}

Much of the teacher--student KL gap comes from formatting rather than reference content. At 1.7B, the thinking-mode template accounts for 55.0\% of the gap and the reference block and its instructions for 15.4\%, leaving 29.6\% for the reference content itself, which falls to 17.3\% at 4B (Figure~\ref{fig:signal}a). On verified scaffolds the teacher is also weakly sensitive to its context: the student's own failed attempt gives a total-variation distance of 0.065 from the gold-context teacher, and unrelated, empty, and template-only contexts give 0.056, 0.051, and 0.056 (Figure~\ref{fig:signal}b). A line-shuffled gold reference is closest (0.025), suggesting the teacher relies on the reference content more than on the order of its reasoning \citep{ichihara2026privileged}.

\begin{table}[t]
\centering
\caption{\textbf{The teacher's advantage is concentrated on failed trajectories.} Fraction of continuations that reach the correct answer after resuming from a prefix of a student rollout, averaged over cut fractions $f\in\{0.1,\ldots,0.9\}$. The student receives no context. $\Delta$ is the difference from the student-alone condition of the same run. $^{*}$Restricted to problems where the answer does not appear elsewhere in the redacted reference (failed: 69 at 1.7B, 72 at 4B; verified: 37 at 1.7B, 52 at 4B).}
\label{tab:recovery}
\small
\resizebox{10cm}{!}{
\begin{tabular}{llcccc}
\toprule
\rowcolor{headergray}
& & \multicolumn{2}{c}{\textbf{Qwen3-1.7B}} & \multicolumn{2}{c}{\textbf{Qwen3-4B}} \\
\cmidrule(lr){3-4}\cmidrule(lr){5-6}
\rowcolor{headergray}
\textbf{Prefix} & \textbf{Context} & \textbf{Recovers} & $\boldsymbol{\Delta}$ & \textbf{Recovers} & $\boldsymbol{\Delta}$ \\
\midrule
\multirow{7}{*}{Failed}
& Gold reference              & 0.77 & \textbf{+0.40} & 0.67 & \textbf{+0.17} \\
& Answer only                 & 0.61 & +0.23          & 0.59 & +0.09 \\
& Gold, answer removed        & 0.59 & +0.22          & 0.57 & +0.08 \\
& Gold, answer removed$^{*}$  & 0.51 & +0.13          & 0.50 & +0.01 \\
& Unrelated reference         & 0.32 & $-0.06$        & 0.37 & $-0.13$ \\
& Empty reference             & 0.37 & $-0.01$        & 0.41 & $-0.09$ \\
& \textit{Student alone}      & \textit{0.37} & ---   & \textit{0.50} & --- \\
\midrule
\multirow{6}{*}{Verified}
& Gold reference              & 0.99 & \textbf{+0.04} & 0.99 & \textbf{+0.02} \\
& Answer only                 & 0.98 & +0.03          & 0.98 & +0.02 \\
& Gold, answer removed        & 0.95 & +0.00          & 0.95 & $-0.01$ \\
& Unrelated reference         & 0.91 & $-0.04$        & 0.93 & $-0.03$ \\
& Empty reference             & 0.94 & $-0.01$        & 0.96 & $-0.01$ \\
& \textit{Student alone}      & \textit{0.95} & ---   & \textit{0.97} & --- \\
\bottomrule
\end{tabular}}
\end{table}

\subsection{Privileged Teacher Advantage}
\label{sec:advantage}

On failed prefixes at 1.7B, the gold-context teacher recovers a correct answer in 77\% of cases versus 37\% for the student ($\Delta = +0.40$ Table~\ref{tab:recovery}), while empty and unrelated references give 37\% and 32\%, so the advantage comes from the privileged content. The final answer alone gives 61\% and the written reasoning without the answer 51\% (excluding cases where the answer reappears), so both contribute, and neither is available to the student at inference time. On verified scaffolds the gap is small ($\Delta = +0.04$ at 1.7B and $+0.02$ at 4B). On failed states, the teacher has higher entropy (0.461 vs.\ 0.297), assigns lower probability to the token sampled by the student (0.775 vs.\ 0.834), and produces a lower KL signal (0.181 vs.\ 0.274) than on verified states (Table~\ref{tab:states}). Thus, OPSD applies most of its supervision on failed states, where the teacher's predictions are both more context-dependent and less concentrated.

\subsection{The Privileged Imitation Gap}
\label{sec:gap}

Because the student cannot observe $r$, it cannot in general match teacher distributions that depend on privileged context. Under forward-KL distillation, the optimal student instead matches the context-averaged teacher distribution.

\begin{proposition}[Mixture Optimum and Irreducible Loss]
\label{prop:mixture}
Let $z=(x,y_{<t})$ denote the student observation and let $R\sim P(r\mid z)$ denote the privileged context. Minimizing
$\mathbb{E}_{R}\left[\mathrm{KL}\!\left(q(\cdot\mid z,R)\parallel p(\cdot\mid z)\right)\right]$
over $p$ yields
$p^\star(\cdot\mid z)=\mathbb{E}_{R\mid z}\left[q(\cdot\mid z,R)\right]$.
The minimum objective value equals the conditional mutual information $I(A;R\mid Z=z)$, and
$H(p^\star(\cdot\mid z))\geq\mathbb{E}_{R\mid z}H(q(\cdot\mid z,R))$.
\end{proposition}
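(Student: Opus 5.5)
The argument is the standard decomposition of expected cross-entropy, specialized to a fixed observation $z$. Write $\bar q(\cdot)=\mathbb{E}_{R\mid z}[q(\cdot\mid z,R)]$ and let $A$ denote the next token with joint law $P(R\mid z)\,q(A\mid z,R)$, so that $\bar q$ is the marginal of $A$ given $Z=z$.

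The first step is to expand the objective as
\begin{equation*}
\mathbb{E}_{R}\,\mathrm{KL}\bigl(q(\cdot\mid z,R)\parallel p\bigr)
=\mathbb{E}_{R}\sum_a q(a\mid z,R)\log\frac{q(a\mid z,R)}{\bar q(a)}
+\sum_a \bar q(a)\log\frac{\bar q(a)}{p(a)} .
\end{equation*}
This follows by inserting $\bar q(a)$ in the ratio $q/p$ and using linearity of expectation. The key observation is that the cross term $\mathbb{E}_R\sum_a q(a\mid z,R)\log(\bar q(a)/p(a))$ collapses to $\sum_a\bar q(a)\log(\bar q(a)/p(a))$ by the definition of $\bar q$. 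The first term does not depend on $p$, and the second equals $\mathrm{KL}(\bar q\parallel p)\ge 0$, with equality iff $p=\bar q$ by Gibbs' inequality. Hence $p^\star=\bar q$.

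The minimum value is then the first term. By the definition of mutual information for the joint law above, this term is exactly $I(A;R\mid Z=z)=\mathbb{E}_R\,\mathrm{KL}(q(\cdot\mid z,R)\parallel \bar q)$.

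For the entropy claim, I would use the identity $I(A;R\mid Z=z)=H(A\mid Z=z)-H(A\mid R,Z=z)=H(\bar q)-\mathbb{E}_{R\mid z}H(q(\cdot\mid z,R))$. Nonnegativity of mutual information then gives the inequality. Alternatively, one can invoke concavity of entropy with Jensen's inequality applied to the mixture $\bar q$.

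I expect the main obstacle to be technical rather than conceptual. The first point is support and finiteness: the objective must be finite for some $p$, which $p=\bar q$ ensures since each $q(\cdot\mid z,R)\ll \bar q$ almost surely. If $R$ is continuous, Tonelli's theorem justifies exchanging expectation and the finite vocabulary sum. The second point is that the proposition concerns the unclipped KL, not the $\kappa$-clipped loss of Eq.~(\ref{eq:opsd}). I would state this explicitly, noting that clipping is inactive when every per-token contribution is below $\kappa$, in which case the result applies verbatim.
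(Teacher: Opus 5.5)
Your proof is correct and essentially matches the paper's argument. The paper also splits the objective into a $p$-independent term plus $\mathrm{KL}(\bar q\,\|\,p)$; it goes through the cross-entropy $-\sum_a \bar q(a)\log p(a)$ rather than inserting $\bar q$ into the ratio, but the identity is the same. It then uses Gibbs' inequality for uniqueness, identifies the minimum with $\mathbb{E}_{R\mid z}\mathrm{KL}(q(\cdot\mid z,R)\,\|\,\bar q)=I(A;R\mid Z=z)$, and proves the entropy bound by concavity and Jensen, which is equivalent to your $I\geq 0$ route.

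One caution on your closing aside about clipping: the clip being inactive at $p=\bar q$ does not make $\bar q$ the minimizer of the clipped loss. At other $p$ the clipped entries can drive the objective below $I(A;R\mid Z=z)$, even below zero; for example, $q=(\tfrac12,\tfrac12)$, $p=(\epsilon,1-\epsilon)$ and $\kappa=0.05$ give about $-0.30$. The proposition should therefore be kept to the unclipped KL, as the paper does.
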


(Proof in Appendix~\ref{app:proofs}.) Proposition~\ref{prop:mixture} formalizes the constraint imposed by privileged information. Whenever teacher actions retain information about $r$ that is unavailable to the student, the corresponding KL loss cannot be eliminated by updating the student alone. This connects OPSD to the broader imitation-learning setting in which the learner lacks information available to the expert \citep{swamy2022sequence}. This predicts that distillation on states with a large privileged advantage can increase the student's uncertainty. Section~\ref{sec:training-behaviour} examines this during training.

\subsection{Scaling Bottlenecks and KL Clipping}
\label{sec:scaling}

The privileged advantage shrinks with scale: reference content accounts for 29.6\% of the KL gap at 1.7B but 17.3\% at 4B (Figure~\ref{fig:signal}a), and $\Delta$ on failed prefixes falls from $+0.40$ to $+0.17$ (Table~\ref{tab:recovery}). OPSD nevertheless keeps about 70\% of its updates on unverified trajectories, and its gain over the base model falls from 3.05 to 1.98 to 0.14 points across 1.7B, 4B, and 8B (Table~\ref{tab:main}). The per-entry KL clip $\kappa$ in Eq.~\ref{eq:opsd} is a second, independent issue. Proposition~\ref{prop:clip} (Appendix~\ref{app:clip}) shows that clipping an entry removes the teacher's upward pull on that token and reverses the sign of its logit gradient. At $\kappa = 0.05$, the clip removes 97\% of positive KL mass on verified scaffolds and 94\% on failed ones, affecting 29--32\% of tokens (Figure~\ref{fig:clipping}).

\begin{takeawaybox}
The diagnosis shows that teacher advantage is concentrated on failed trajectories, while the recovery gap is small on verified trajectories. This advantage shrinks with scale, alongside diminishing OPSD gains from 1.7B to 8B. These findings reveal a scaling bottleneck and motivate distillation on verified trajectories.
\end{takeawaybox}

\section{OASIS: distilling where the teacher is imitable}
\label{sec:method}

Supervision is most useful when the teacher's behaviour can be reproduced from information available to the student. The corresponding quantity, $I(A;R\mid Z)$ in Proposition~\ref{prop:mixture}, cannot be evaluated during training, so OASIS uses outcome verification as an empirical proxy: it restricts supervision to verified on-policy trajectories and builds the teacher context from the same rollouts.

\begin{figure}[t]
\centering
\includegraphics[width=\textwidth]{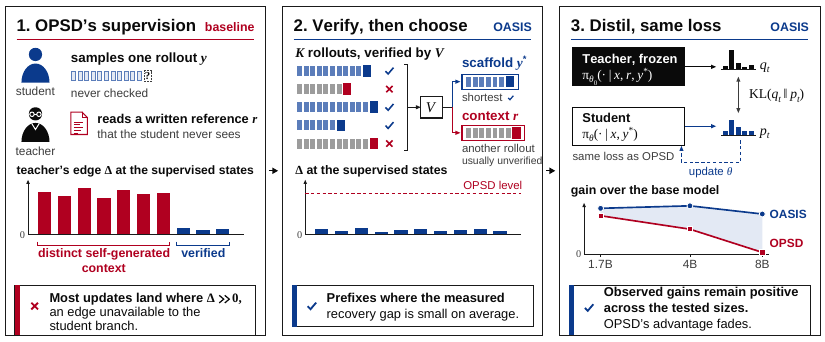}
\vspace{-0.4cm}
\caption{\textbf{OPSD versus OASIS.} \textbf{(1)} OPSD distills along a single unverified rollout $y$, so most updates land where the teacher's edge from the hidden reference $r$ is large. \textbf{(2)} OASIS samples $K$ rollouts, verifies them, and takes the shortest verified one as scaffold $y^*$ and another same-problem rollout as context $r$, typically unverified as context $r$, where the teacher's edge is small. \textbf{(3)} The student is trained against the frozen teacher with the unchanged OPSD loss.}
\label{fig:method}
\end{figure}

\subsection{Verification as a criterion for supervision}
\label{sec:criterion}

Let $V(x,y)\in\{0,1\}$ denote a task verifier that evaluates the final answer of a trajectory. The expected OPSD objective can be decomposed according to the verifier outcome:
\begin{equation}
\mathbb{E}_{y\sim\pi_{\theta}}\big[\mathcal{L}(\theta;r,y)\big] = 
\underbrace{\Pr(V{=}1) \cdot \mathbb{E}\big[\mathcal{L}\mid V{=}1\big]}_{\text{verified trajectories}}
+
\underbrace{\Pr(V{=}0) \cdot \mathbb{E}\big[\mathcal{L}\mid V{=}0\big]}_{\text{unverified trajectories}}.
\label{eq:split}
\end{equation}

The two groups differ sharply in privileged advantage ($\Delta=+0.04$ versus $+0.40$ at 1.7B,Table~\ref{tab:recovery}), yet unverified trajectories receive about 72\% of OPSD updates at 1.7B (Table~\ref{tab:budget}).

OASIS retains supervision from the verified trajectories:
\begin{equation}
\mathcal{L}_{\textsc{oasis}}(\theta) =
\mathbb{E}_{x}\Big[
\mathbb{1}\big[\mathcal{Y}^{+}(x)\neq\emptyset\big] \cdot
\mathcal{L}\big(\theta;r(x),y^\star(x)\big)
\Big],
\qquad
\mathcal{Y}^{+}(x) = 
\{y\in\mathcal{Y}(x):V(x,y)=1\},
\label{eq:oasis}
\end{equation}
where $\mathcal{Y}(x)$ contains $K$ rollouts sampled from the current policy. Except for rare all-masked micro-batches, this indicator restricts supervision to problems with at least one verified rollout. The fallback rule used in that rare case is given in Appendix~\ref{app} (paragraph ``Masking''). The loss, teacher, prompts, clipping constant, temperature, and optimizer are identical to OPSD. Only the distribution of supervised trajectories changes, and the scaffold remains on policy. Unlike rejection-sampling fine-tuning \citep{yuan2023scaling, zelikman2022star}, the verified trajectory is not the training target: the teacher distribution is. %Section~\ref{sec:predictions} compares these factors directly.

\subsection{Scaffold: the shortest verified trajectory}
\label{sec:scaffold}

For each problem, we draw $K$ rollouts using the student prompt and the sampling settings of OPSD, verify their final answers, and select
\begin{equation}
y^\star(x) =
\operatorname*{argmin}_{y\in\mathcal{Y}^{+}(x)} |y|.
\label{eq:shortest}
\end{equation}

Verification decides whether a problem contributes supervision. Length only chooses among verified trajectories. A verified trajectory contains a completed answer and a stopping point that truncated rollouts lack (Section~\ref{sec:where}), and the shortest one limits the weight of unnecessarily long trajectories. We treat this rule as an empirical choice (Appendix~\ref{app:ablations}). %51.7\% of problems are supervised at 1.7B and 50.8\% at 4B (Table~\ref{tab:budget}).

\subsection{Context: a distinct same-problem rollout} 
\label{sec:context}

On verified scaffolds the teacher is relatively insensitive to its context (Section~\ref{sec:anatomy}), consistent with the student already recovering 95\% of verified prefixes without privileged context. We therefore construct the context from another rollout of the same problem:
\begin{equation}
r(x)
\in
\mathcal{Y}(x)\setminus\mathcal{Y}^{+}(x),
\qquad
r(x)
=
\operatorname*{arg\,min}_{y\in
\mathcal{Y}(x)\setminus\mathcal{Y}^{+}(x)}
|y|.
\label{eq:context}
\end{equation}
When all sampled rollouts are verified, we instead use another verified rollout and never use the scaffold itself as its own context. Figure~\ref{fig:budget}c reports the frequency of each case. Because the verifier is imperfect, a rollout in $\mathcal{Y}(x)\setminus\mathcal{Y}^{+}(x)$ is more precisely described as a trajectory that was not verified as correct rather than as a necessarily incorrect solution. The method therefore requires only a final answer per training problem, not a written solution.

\section{Experiments}
\label{sec:results}

\subsection{Setup}
\label{sec:exp-setup}

We follow the protocol of \citet{zhao2026selfdistilled} and change only the construction of $(r,y)$. We use Qwen3-1.7B, Qwen3-4B, and Qwen3-8B \citep{yang2025qwen3}. Training uses the mathematical subset of OpenThoughts \citep{guha2026openthoughts}. OPSD uses both the problem and its written solution, whereas OASIS uses only the final answer. All distillation methods use the same core settings unless otherwise stated, while Base, SFT, and GRPO use method-specific components. Complete distillation settings are provided in Table~\ref{tab:hparams}. For Table~\ref{tab:main}, we use a level playing field in terms of training samples. OPSD, AVSD, and CRISP use 32 problems per step for 100 steps, while OASIS samples $K=8$ rollouts for 64 candidate problems and trains on at most 32 verified problems. We compare the base model, supervised fine-tuning on the reference solutions, GRPO \citep{shao2024deepseekmath} with a binary outcome reward verified against the same answers, OPSD \citep{zhao2026selfdistilled}, AVSD \citep{nguyen2026avsd}, and CRISP~\citep{sang2026crisp}. Evaluation uses Avg@12 on AIME 2024, AIME 2025, and HMMT 2025 with the sampling settings recommended for Qwen3. Checkpoints are evaluated every 20 steps using the same checkpoint selection rule for all methods. Unless noted otherwise, each reported score is from a single training run evaluated once. Table~\ref{tab:qwen3-4b-opsd-oasis} additionally reports repeated-evaluation variability (three evaluation seeds on one Qwen3-4B checkpoint per method), confirming that this variation is small relative to the OASIS--OPSD gap. 

\begin{table}[t]
\centering
\caption{\textbf{Competition mathematics benchmark performance (Avg@12, \%).} Best result per block in \textbf{bold}, second best \underline{underlined}. $\Delta_{\text{Ours}-\text{OPSD}}$ highlights the growing margin of OASIS over OPSD as model scale increases.}
\label{tab:main}
\small
\resizebox{12cm}{!}{
\setlength{\tabcolsep}{10pt} % Optimal spacing between columns
\renewcommand{\arraystretch}{1.12} % Comfortable row height
\begin{tabular}{llcccc}
\toprule
\rowcolor{headergray}
\textbf{Model} & \textbf{Method} & \textbf{AIME24} & \textbf{AIME25} & \textbf{HMMT25} & \textbf{Average} \\
\midrule

\multirow{7}{*}{\textbf{Qwen3-1.7B}} 
& Base & 51.50 & 36.70 & 23.10 & 37.10 \\
& SFT & 48.40 & 36.70 & 22.70 & 35.93 \\
& GRPO & 51.10 & 38.30 & 23.70 & 37.70 \\
& OPSD & \underline{53.88} & \underline{40.80} & 25.76 & \underline{40.15} \\
& AVSD & 51.85 & 38.61 & \underline{26.39} & 38.95 \\
\rowcolor{ourshighlight}
& \textbf{OASIS (ours)} & \textbf{\color{bestcolor}54.17} & \textbf{\color{bestcolor}41.11} & \textbf{\color{bestcolor}26.94} & \textbf{\color{bestcolor}40.74} \\
\rowcolor{diffhighlight}
& \textbf{\textcolor{diffcolor}{$\Delta_{\text{Ours}-\text{OPSD}}$}} & \textbf{\textcolor{diffcolor}{+0.29}} & \textbf{\textcolor{diffcolor}{+0.31}} & \textbf{\textcolor{diffcolor}{+1.18}} & \textbf{\textcolor{diffcolor}{+0.59}} \\

\cmidrule(lr){1-6}

\multirow{8}{*}{\textbf{Qwen3-4B}} 
& Base & 73.02 & 65.00 & 41.56 & 59.86 \\
& SFT & 70.20 & 62.30 & 43.40 & 58.63 \\
& GRPO & 73.06 & 63.33 & \underline{45.00} & 60.46 \\
& OPSD & \underline{74.48} & 67.92 & 43.12 & \underline{61.84} \\
& AVSD & 73.06 & \underline{68.06} & 43.98 & 61.70 \\
& CRISP & 74.06 & 63.33 & 40.94 & 59.44 \\
\rowcolor{ourshighlight}
& \textbf{OASIS (ours)} & \textbf{\color{bestcolor}76.66} & \textbf{\color{bestcolor}69.17} & \textbf{\color{bestcolor}45.27} & \textbf{\color{bestcolor}63.70} \\
\rowcolor{diffhighlight}
& \textbf{\textcolor{diffcolor}{$\Delta_{\text{Ours}-\text{OPSD}}$}} & \textbf{\textcolor{diffcolor}{+2.18}} & \textbf{\textcolor{diffcolor}{+1.25}} & \textbf{\textcolor{diffcolor}{+2.15}} & \textbf{\textcolor{diffcolor}{+1.86}} \\

\cmidrule(lr){1-6}

\multirow{8}{*}{\textbf{Qwen3-8B}} 
& Base & 75.11 & 67.71 & 45.10 & 62.64 \\
& SFT & 73.61 & 66.67 & \underline{46.67} & 62.32 \\
& GRPO & 75.28 & \underline{71.67} & 41.67 & 62.87 \\
& OPSD & 73.75 & 68.75 & 45.84 & 62.78 \\
& AVSD & 75.74 & 70.09 & 44.91 & \underline{63.58} \\
& CRISP & \underline{76.04} & 67.29 & 42.92 & 62.08 \\
\rowcolor{ourshighlight}
& \textbf{OASIS (ours)} & \textbf{\color{bestcolor}77.50} & \textbf{\color{bestcolor}72.22} & \textbf{\color{bestcolor}47.77} & \textbf{\color{bestcolor}65.83} \\
\rowcolor{diffhighlight}
& \textbf{\textcolor{diffcolor}{$\Delta_{\text{Ours}-\text{OPSD}}$}} & \textbf{\textcolor{diffcolor}{+3.75}} & \textbf{\textcolor{diffcolor}{+3.47}} & \textbf{\textcolor{diffcolor}{+1.93}} & \textbf{\textcolor{diffcolor}{+3.05}} \\
\bottomrule
\end{tabular}}
\end{table}

\subsection{Main results}
\label{sec:main}

OASIS improves over OPSD by 0.59, 1.86, and 3.05 points on average at 1.7B, 4B, and 8B (Table~\ref{tab:main}). At 1.7B the difference is within the range that sampling variation can produce on 30 problems, so it alone is not conclusive. The differences at 4B and 8B are larger. GRPO uses the same verifier and answer labels but optimizes a scalar outcome reward, and improves over the base model by only 0.60, 0.60, and 0.23 points, against 3.64, 3.84, and 3.19 for OASIS, indicating that the dense teacher distribution contributes beyond the verifier alone. SFT on the written solutions is at or below the base model at all three scales. Across scale, OPSD's gain over the base model falls from 3.05 to 1.98 to 0.14 while the teacher's privileged advantage on failed prefixes falls from $+0.40$ to $+0.17$ (Table~\ref{tab:recovery}). This matches Section~\ref{sec:gap}: as the student grows stronger, the reference adds less information, whereas OASIS already restricts supervision to the regime where that component is small.

% \begin{table}[t]
\begin{wraptable}{r}{0.5\textwidth}
\vspace{-0.8cm}
\caption{Qwen3-4B reasoning benchmark results. Results are reported as mean $\pm$ standard deviation over 3 evaluation seeds with 12 samples per problem.}
\label{tab:qwen3-4b-opsd-oasis}
\resizebox{7cm}{!}{
\centering
\scalebox{0.8}{\begin{tabular}{llcc}
\toprule
\rowcolor{headergray}
\textbf{Benchmark} & \textbf{Metric} & \textbf{OPSD} & \textbf{OASIS} \\
\midrule
\multirow{3}{*}{AIME 2024}
& Avg@12  & $74.4 \pm 0.71$ & $\mathbf{76.6 \pm 0.23}$\\
& Pass@12 & $\mathbf{86.6 \pm 0.00}$ & $\mathbf{86.6 \pm 0.00}$\\
& Maj@12  & $80.00 \pm 0.00$ & $\mathbf{81.1 \pm 1.57}$\\
\midrule
\multirow{3}{*}{AIME 2025}
& Avg@12  & 67.9 $\pm 1.33$ & $\mathbf{69.1 \pm 0.23}$\\
& Pass@12 & 83.3 $\pm 0.00$ & $\mathbf{86.6 \pm 0.00}$\\
& Maj@12  & 73.3 $\pm 3.33$ & $\mathbf{80.0 \pm 0.00}$\\
\midrule
\multirow{3}{*}{HMMT 2025}
& Avg@12  & 43.1 $\pm 0.89$ & $\mathbf{45.2 \pm 0.23}$\\
& Pass@12 & $\mathbf{64.4 \pm 1.92}$ & 63.3 $\pm 0.00$\\ 
& Maj@12  & 51.11 $\pm 3.85$ & $\mathbf{56.6 \pm 2.72}$\\
\bottomrule
\end{tabular}}}
\end{wraptable}

\subsubsection{Results at scale}
\label{app:detailed-4b}
Table~\ref{tab:qwen3-4b-opsd-oasis} reports the performance of OPSD and OASIS with Qwen3-4B across three competition mathematics benchmarks. We report three complementary metrics. Avg@12 measures the mean accuracy across 12 sampled
solutions per problem. Pass@12 measures the fraction of problems for which at least one of the 12 samples is correct.  Each value is reported as the mean $\pm$ standard deviation over three evaluation seeds using the same trained checkpoint.

\paragraph{Training cost}
\label{sec:cost}

OASIS changes how many problems a model has to see, how many it is trained on, and what it needs for each of them. Table~\ref{tab:cost} compares the two sides of that
budget with the scores they buy. Table \ref{tab:cost} uses a different training configuration from Table \ref{tab:main}. In Table \ref{tab:cost}, OPSD and AVSD are trained with a batch size of 64 for up to 200 steps, whereas Table \ref{tab:main} uses a batch size of 32 for 100 steps. OASIS uses a batch size of 64 in both settings. However, because of its training construction, it effectively trains on no more than 32 samples per step. So, the baselines are trained on 12{,}800 problems, each with a written solution. OASIS sees half as many problems, 6{,}400, and trains on
roughly half of those, the ones for which at least one rollout verifies. It never uses a written solution, only a final answer per problem. With about a quarter of the supervised problems, OASIS matches or exceeds both baselines on average at both scales. At 4B it reaches 63.70 against 63.60 for OPSD and 62.90 for AVSD. At 8B the gap is larger, 65.83 against 64.80 and 64.03. The two columns where a baseline is ahead, HMMT 2025 at 4B and AIME 2024 at 8B, are within about one point. The price is paid in generation. OASIS samples eight rollouts for every problem it sees, so it generates 51{,}200 rollouts against 12{,}800 for a one-rollout OPSD run on the larger problem set. Sampling is the cheaper side of the trade in most settings, since it needs no labeled data and parallelizes well, while written solutions do not exist for most datasets with checkable answers. Where generation is the binding constraint, the $K=4$ ablation in Appendix~\ref{app:ablations} halves this cost. 
Algorithm~\ref{alg:oasis} (Appendix~\ref{app}) summarises one OASIS update. With $K=8$, OASIS generates 43.5M tokens over 100 steps at 1.7B versus 2.76M for OPSD, but supervises fewer, 1.98M versus 2.76M, because about half of the problems are masked and verified scaffolds are shorter (603 versus 859 tokens, Table~\ref{tab:budget}).

\begin{table}[t]
\centering
\caption{\textbf{ Written-supervision and rollout trade-offs.}This table reports annotation, rollout, and supervised-example trade-offs. Problems seen and problems that received a gradient over training, rollouts generated, and whether written solutions are required. Scores are Avg@12. Best per column within each model in \textbf{bold}.}
\label{tab:cost}
\small
\resizebox{13cm}{!}{
\setlength{\tabcolsep}{4.5pt}
\begin{tabular}{llcccc>{\raggedright\arraybackslash}p{2.4cm}cccc}
\toprule
\rowcolor{headergray}
& & \multicolumn{4}{c}{\textbf{Training budget}} & \multicolumn{4}{c}{\textbf{Accuracy}} \\
\cmidrule(lr){3-6}\cmidrule(lr){7-10}
\rowcolor{headergray}
\textbf{Model} & \textbf{Method}
& \textbf{Seen} & \textbf{Trained} & \textbf{Rollouts} & \textbf{Written solution?}
& \textbf{AIME24} & \textbf{AIME25} & \textbf{HMMT25} & \textbf{Avg.} \\
\midrule
\multirow{3}{*}{Qwen3-4B}
& OPSD$^{\dagger}$ & 12.8K & 12.8K & 12.8K & Yes & 76.40 & 68.30 & \textbf{46.10} & 63.60 \\
& AVSD$^{\dagger}$ & 12.8K & 12.8K & 12.8K & Yes & 76.20 & 68.30 & 44.20 & 62.90 \\
\rowcolor{ourshighlight}
& \textbf{OASIS} & \textbf{6.4K} & \textbf{$\approx$3.2K} & 51.2K & \textbf{No}
& \textbf{76.66} & \textbf{69.17} & 45.27 & \textbf{63.70} \\
\midrule
\multirow{3}{*}{Qwen3-8B}
& OPSD$^{\dagger}$ & 12.8K & 12.8K & 12.8K & Yes & \textbf{77.80} & 70.80 & 45.80 & 64.80 \\
& AVSD$^{\dagger}$ & 12.8K & 12.8K & 12.8K & Yes & 75.40 & 69.60 & 47.10 & 64.03 \\
\rowcolor{ourshighlight}
& \textbf{OASIS} & \textbf{6.4K} & \textbf \textbf{$\approx$3.2K} & 51.2K & \textbf{No} 
& 77.50 & \textbf{72.22} & \textbf{47.77} & \textbf{65.83} \\
\bottomrule
\end{tabular}}
\end{table}

\subsection{Training behaviour}
\label{sec:training-behaviour}

At 1.7B, OASIS keeps a higher fraction of verified rollouts than OPSD (0.314 versus 0.283 on average), its student entropy reaches 0.49 versus 0.70, and its top-1 agreement with the teacher stays at 0.931 while OPSD's falls from 0.932 to 0.888 (Figure~\ref{fig:dynamics}), consistent with Proposition~\ref{prop:mixture}. At 4B the separation lasts for about two thirds of training, after which OASIS also drifts: top-1 agreement ends at 0.853 and about half of the tokens have a clipped entry (Table~\ref{tab:budget}). Clipping is one possible explanation (Appendix~\ref{app:clip}), so we evaluate both methods under the same checkpoint selection rule.

\begin{takeawaybox}
Even when the teacher is conditioned on the student’s own unverified rollout, while the student is supervised on a verified rollout selected primarily by its final output label, distillation can outperform standard OPSD.
\end{takeawaybox}

\section{Conclusion}
\label{sec:conclusion}

We find that OPSD's privileged teacher is most useful precisely where its signal is least imitable. The teacher has a large advantage on failed trajectories, where OPSD concentrates most of its supervision, but little advantage on verified trajectories. OASIS therefore restricts distillation to verified on-policy scaffolds and replaces the written solution with a distinct same-problem rollout, typically the student's own unverified attempt, retaining the teacher's dense signal without requiring reference solutions. Across Qwen3-1.7B, 4B and 8B, OASIS improves over the base model by 3.64, 3.84 and 3.19 points, while OPSD's gain falls from 3.05 to 1.98 and 0.14. These results suggest that the effectiveness of privileged self-distillation depends on whether the teacher's supervision can be transferred to the student's available context.
Our study is limited to competition mathematics and one model family, with higher generation cost and single runs at 4B and 8B. Extending this criterion to settings with inexpensive outcome verification but no written solutions, such as code with unit tests, is a natural next step.

\section*{Acknowledgement}
This work was supported in part by a generous compute grant from Lambda AI. We gratefully acknowledge their support.

\bibliography{iclr2025_conference}

@inproceedings{
zhao2026selfdistilled,
title={Self-Distilled Reasoner: On-Policy Self-Distillation for Large Language Models},
author={Siyan Zhao and Zhihui Xie and Mengchen Liu and Jing Huang and Guan Pang and Feiyu Chen and Aditya Grover},
booktitle={Forty-third International Conference on Machine Learning},
year={2026},
url={https://openreview.net/forum?id=Jpxfof0EaS}
}

@article{hinton2015distilling,
  title={Distilling the knowledge in a neural network},
  author={Hinton, Geoffrey and Vinyals, Oriol and Dean, Jeff},
  journal={arXiv preprint arXiv:1503.02531},
  year={2015}
}

@article{romero2015fitnets,
  title={Fitnets: Hints for thin deep nets},
  author={Romero, Adriana and Ballas, Nicolas and Kahou, Samira Ebrahimi and Chassang, Antoine and Gatta, Carlo and Bengio, Yoshua},
  journal={arXiv preprint arXiv:1412.6550},
  year={2014}
}

@inproceedings{kim2016sequence,
  title={Sequence-level knowledge distillation},
  author={Kim, Yoon and Rush, Alexander M},
  booktitle={Proceedings of the 2016 conference on empirical methods in natural language processing},
  pages={1317--1327},
  year={2016}
}

@inproceedings{gu2024minillm,
  title={Minillm: Knowledge distillation of large language models},
  author={Gu, Yuxian and Dong, Li and Wei, Furu and Huang, Minlie},
  booktitle={International Conference on Learning Representations},
  volume={2024},
  pages={32694--32717},
  year={2024}
}

@article{shen2026purified,
  title={Purified OPSD: On-Policy Self-Distillation Without Losing How to Think},
  author={Shen, Zhanming and Tong, Jintao and Yan, Shaotian and Shen, Chen and Chen, Hao and Ye, Wentao and Hu, Xiaomeng and Miao, Rui and Wang, Haobo and Zhao, Junbo and others},
  journal={arXiv preprint arXiv:2607.02234},
  year={2026}
}

@article{ichihara2026privileged,
  title={Privileged Solutions or Context-Induced Teacher Behavior? Dissecting On-Policy Self-Distillation},
  author={Ichihara, Yuki and Iwase, Naoto and Quamar, Mohammad Atif and Komiyama, Junpei},
  journal={arXiv preprint arXiv:2608.09228},
  year={2026}
}

@article{uesato2022solving,
  title={Solving math word problems with process-and outcome-based feedback},
  author={Uesato, Jonathan and Kushman, Nate and Kumar, Ramana and Song, Francis and Siegel, Noah and Wang, Lisa and Creswell, Antonia and Irving, Geoffrey and Higgins, Irina},
  journal={arXiv preprint arXiv:2211.14275},
  year={2022}
}

@inproceedings{lightman2024let,
  title={Let's verify step by step},
  author={Lightman, Hunter and Kosaraju, Vineet and Burda, Yuri and Edwards, Harrison and Baker, Bowen and Lee, Teddy and Leike, Jan and Schulman, John and Sutskever, Ilya and Cobbe, Karl},
  booktitle={International Conference on Learning Representations},
  volume={2024},
  pages={39578--39601},
  year={2024}
}

@inproceedings{ross2011reduction,
  title={A reduction of imitation learning and structured prediction to no-regret online learning},
  author={Ross, St{\'e}phane and Gordon, Geoffrey and Bagnell, Drew},
  booktitle={Proceedings of the fourteenth international conference on artificial intelligence and statistics},
  pages={627--635},
  year={2011},
  organization={JMLR Workshop and Conference Proceedings}
}

@article{shao2024deepseekmath,
  title={Deepseekmath: Pushing the limits of mathematical reasoning in open language models},
  author={Shao, Zhihong and Wang, Peiyi and Zhu, Qihao and Xu, Runxin and Song, Junxiao and Bi, Xiao and Zhang, Haowei and Zhang, Mingchuan and Li, YK and Wu, Yang and others},
  journal={arXiv preprint arXiv:2402.03300},
  year={2024}
}

@inproceedings{
wang2022self,
title={Self-Consistency Improves Chain of Thought Reasoning in Language Models},
author={Xuezhi Wang and Jason Wei and Dale Schuurmans and Quoc V Le and Ed H. Chi and Sharan Narang and Aakanksha Chowdhery and Denny Zhou},
booktitle={The Eleventh International Conference on Learning Representations },
year={2023},
url={https://openreview.net/forum?id=1PL1NIMMrw}
}

@article{zelikman2022star,
  title={Star: Bootstrapping reasoning with reasoning},
  author={Zelikman, Eric and Wu, Yuhuai and Mu, Jesse and Goodman, Noah},
  journal={Advances in Neural Information Processing Systems},
  volume={35},
  pages={15476--15488},
  year={2022}
}

@article{swamy2022sequence,
  title={Sequence model imitation learning with unobserved contexts},
  author={Swamy, Gokul and Choudhury, Sanjiban and Bagnell, J and Wu, Steven},
  journal={Advances in Neural Information Processing Systems},
  volume={35},
  pages={17665--17676},
  year={2022}
}

@article{yuan2023scaling,
  title={Scaling relationship on learning mathematical reasoning with large language models},
  author={Yuan, Zheng and Yuan, Hongyi and Li, Chengpeng and Dong, Guanting and Lu, Keming and Tan, Chuanqi and Zhou, Chang and Zhou, Jingren},
  journal={arXiv preprint arXiv:2308.01825},
  year={2023}
}

@article{yang2025qwen3,
  title={Qwen3 technical report},
  author={Yang, An and Li, Anfeng and Yang, Baosong and Zhang, Beichen and Hui, Binyuan and Zheng, Bo and Yu, Bowen and Gao, Chang and Huang, Chengen and Lv, Chenxu and others},
  journal={arXiv preprint arXiv:2505.09388},
  year={2025}
}

@inproceedings{guha2026openthoughts,
  title={Openthoughts: Data recipes for reasoning models},
  author={Guha, Etash and Marten, Ryan and Keh, Sedrick and Raoof, Negin and Smyrnis, Georgios and Bansal, Hritik and Nezhurina, Marianna and Mercat, Jean and Vu, Trung and Sprague, Zayne and others},
  booktitle={International Conference on Learning Representations},
  volume={2026},
  pages={108059--108130},
  year={2026}
}

@inproceedings{agarwal2024policy,
  title={On-policy distillation of language models: Learning from self-generated mistakes},
  author={Agarwal, Rishabh and Vieillard, Nino and Zhou, Yongchao and Stanczyk, Piotr and Ramos Garea, Sabela and Geist, Matthieu and Bachem, Olivier},
  booktitle={International Conference on Learning Representations},
  volume={2024},
  pages={21246--21263},
  year={2024}
}

@article{vapnik2009lupi,
  title={On the theory of learnining with privileged information},
  author={Pechyony, Dmitry and Vapnik, Vladimir},
  journal={Advances in neural information processing systems},
  volume={23},
  year={2010}
}

@article{weihs2021bridging,
  title={Bridging the imitation gap by adaptive insubordination},
  author={Weihs, Luca and Jain, Unnat and Liu, Iou-Jen and Salvador, Jordi and Lazebnik, Svetlana and Kembhavi, Aniruddha and Schwing, Alex},
  journal={Advances in Neural Information Processing Systems},
  volume={34},
  pages={19134--19146},
  year={2021}
}

@InProceedings{pinto2018asymmetric,
  title = 	 {A Theoretical Justification for Asymmetric Actor-Critic Algorithms},
  author =       {Lambrechts, Gaspard and Ernst, Damien and Mahajan, Aditya},
  booktitle = 	 {Proceedings of the 42nd International Conference on Machine Learning},
  pages = 	 {32375--32405},
  year = 	 {2025},
  editor = 	 {Singh, Aarti and Fazel, Maryam and Hsu, Daniel and Lacoste-Julien, Simon and Berkenkamp, Felix and Maharaj, Tegan and Wagstaff, Kiri and Zhu, Jerry},
  volume = 	 {267},
  series = 	 {Proceedings of Machine Learning Research},
  month = 	 {13--19 Jul},
  publisher =    {PMLR},
  url = 	 {https://proceedings.mlr.press/v267/lambrechts25a.html} 
}

@article{anthony2017expert,
  title={Thinking fast and slow with deep learning and tree search},
  author={Anthony, Thomas and Tian, Zheng and Barber, David},
  journal={Advances in neural information processing systems},
  volume={30},
  year={2017}
}

@article{schulman2017ppo,
  title={Proximal policy optimization algorithms},
  author={Schulman, John and Wolski, Filip and Dhariwal, Prafulla and Radford, Alec and Klimov, Oleg},
  journal={arXiv preprint arXiv:1707.06347},
  year={2017}
}

@article{nguyen2026avsd,
  title={AVSD: Adaptive-View Self-Distillation by Balancing Consensus and Teacher-Specific Privileged Signals},
  author={Nguyen, Duy and Xiao, Hanqi and Prasad, Archiki and Khan, Zaid and Das, Anirban and Zhang, Austin and Sahu, Sambit and Lee, Hyunji and Stengel-Eskin, Elias and Bansal, Mohit},
  journal={arXiv preprint arXiv:2605.20643},
  year={2026}
}

@article{sang2026crisp,
  title={Crisp: Compressed reasoning via iterative self-policy distillation},
  author={Sang, Hejian and Xu, Yuanda and Zhou, Zhengze and He, Ran and Wang, Zhipeng and Sun, Jiachen},
  journal={arXiv preprint arXiv:2603.05433},
  year={2026}
}

@article{ke2026respecting,
  title={Respecting self-uncertainty in on-policy self-distillation for efficient llm reasoning},
  author={Ke, Junlong and Wen, Zichen and Li, Weijia and He, Conghui and Zhang, Linfeng},
  journal={arXiv preprint arXiv:2605.13255},
  year={2026}
}

@article{hou2026dash,
  title={Dash: Divergence-adaptive supervision horizons for on-policy self-distillation of reasoning models},
  author={Hou, ZhiYan and Tang, Xinyu and An, Hongyan and Zhang, Jianjin and Wang, Weizhen and Han, Yunyun and Li, Gengsheng and Hao, Xiangzhao and Guo, Haiyun and Hu, Wenbin and others},
  journal={arXiv preprint arXiv:2608.06243},
  year={2026}
}
\bibliographystyle{iclr2027_conference}

\appendix
\section{Appendix}

\subsection{Proofs}
\label{app:proofs}

\subsubsection{Proposition~\ref{prop:mixture}: the forward-KL optimum under privileged context}

\begin{proposition*}
Let $Z$ denote the information available to the student at a supervised state, here $z=(x,y_{<t})$, and let $R$ denote the privileged context. For a fixed $z$, let $R\sim P(r\mid z)$ and let $q(\cdot\mid z,r)$ be the teacher's next-token distribution. If the student distribution $p(\cdot\mid z)$ is restricted to depend only on $z$, then the minimizer of the expected forward KL divergence

$$
\mathbb{E}_{R\mid z}
\left[
\mathrm{KL}\!\left(
q(\cdot\mid z,R)\,\|\,p(\cdot\mid z)
\right)
\right]
$$

is

$$
p^{\star}(\cdot\mid z)
=
\mathbb{E}_{R\mid z}
\left[q(\cdot\mid z,R)\right].
$$

The minimum value is the conditional mutual information
$I(A;R\mid Z=z)$, where $A$ is sampled from
$q(\cdot\mid z,R)$. Moreover,

$$
H\!\left(p^{\star}(\cdot\mid z)\right)
\geq
\mathbb{E}_{R\mid z}
\left[
H\!\left(q(\cdot\mid z,R)\right)
\right].
$$

\end{proposition*}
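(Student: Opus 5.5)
The plan is the standard "compensation identity" for forward KL against a mixture. Fix $z$ and write $\bar q := \mathbb{E}_{R\mid z}[q(\cdot\mid z,R)]$. This is a valid distribution on the vocabulary $\mathcal{V}$, since it is a convex combination of distributions. For any candidate $p$ with $p(a)>0$ wherever $\bar q(a)>0$, I will expand the objective termwise. Writing $\log\frac{q}{p}=\log\frac{q}{\bar q}+\log\frac{\bar q}{p}$ and using linearity of expectation,
\begin{equation*}
\mathbb{E}_{R\mid z}\big[\mathrm{KL}(q(\cdot\mid z,R)\,\|\,p)\big]
=\mathbb{E}_{R\mid z}\big[\mathrm{KL}(q(\cdot\mid z,R)\,\|\,\bar q)\big]
+\sum_{a\in\mathcal{V}}\bar q(a)\log\frac{\bar q(a)}{p(a)} .
\end{equation*}
The second term is exactly $\mathrm{KL}(\bar q\,\|\,p)$, because averaging $q(a\mid z,R)$ over $R$ yields $\bar q(a)$. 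The first term does not depend on $p$. By Gibbs' inequality, $\mathrm{KL}(\bar q\|p)\ge 0$, with equality if and only if $p=\bar q$. Hence $p^\star=\bar q$ is the unique minimizer.

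A small technical step handles support. If $q(a\mid z,r)>0$ for some $r$ with positive probability, then $\bar q(a)>0$. So any $p$ that vanishes at such an $a$ gives infinite loss and can be excluded. Conversely, on atoms where $\bar q(a)=0$, we have $q(a\mid z,R)=0$ almost surely, so the ratio $q/\bar q$ is well defined wherever it appears, using the convention $0\log 0=0$. If $R$ is continuous, the same computation holds with the expectation written as an integral. Only Fubini is needed to swap the finite sum over $\mathcal{V}$ with $\mathbb{E}_R$, which is justified because all terms are bounded below after the decomposition.

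Next, I identify the minimum value. Consider the joint law of $(R,A)$ given $Z=z$, with $R\sim P(\cdot\mid z)$ and $A\mid R\sim q(\cdot\mid z,R)$. The marginal of $A$ is then $\bar q$. The quantity $\mathbb{E}_R[\mathrm{KL}(P_{A\mid R}\|P_A)]$ is the standard expression of $I(A;R\mid Z=z)$, so the minimum equals that conditional mutual information.

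Finally, the entropy inequality follows from the identity $I(A;R\mid Z=z)=H(A\mid Z=z)-H(A\mid R,Z=z)=H(\bar q)-\mathbb{E}_R H(q(\cdot\mid z,R))$. This identity holds for finite $\mathcal{V}$, where all entropies are finite. Combined with $I\ge 0$, which was already shown as a minimum of nonnegative KLs, it gives the claim. Alternatively, one can invoke the concavity of entropy via Jensen's inequality.

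The only step needing care is the measure-theoretic bookkeeping: support conditions and the interchange of sum and expectation for a general $R$. The algebra itself is routine. I would handle this by first treating finitely supported $R$ and then noting that the argument uses only linearity and finiteness of $\mathcal{V}$.
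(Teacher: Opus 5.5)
Your proposal is correct and follows essentially the same route as the paper. The paper expands the objective as $\mathbb{E}_{R\mid z}[\sum_a q\log q] + H(\bar q) + \mathrm{KL}(\bar q\,\|\,p)$, which regroups into exactly your identity $\mathbb{E}_{R\mid z}[\mathrm{KL}(q\,\|\,\bar q)] + \mathrm{KL}(\bar q\,\|\,p)$, and it then identifies the minimum as $I(A;R\mid Z=z)$ in the same way. For the entropy bound the paper uses concavity of entropy via Jensen, whereas your main argument reads it off from $I(A;R\mid Z=z)=H(\bar q)-\mathbb{E}_{R\mid z}H(q(\cdot\mid z,R))\ge 0$, which is the same Jensen gap; your support and sum--expectation interchange remarks are sound bookkeeping that the paper leaves implicit.
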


\begin{proof}
For a fixed $z$, define the mixture

$$
\bar q(a)
=
\mathbb{E}_{R\mid z}
\left[q(a\mid z,R)\right].
$$

Expanding the expected forward KL gives

$$
\begin{aligned}
\mathbb{E}_{R\mid z}
\left[
\mathrm{KL}(q\,\|\,p)
\right]
&=
\mathbb{E}_{R\mid z}
\left[
\sum_a q(a\mid z,R)\log q(a\mid z,R)
\right]
-
\sum_a \bar q(a)\log p(a\mid z).
\end{aligned}
$$

The first term does not depend on $p$. For the second term,

$$
-\sum_a \bar q(a)\log p(a\mid z)
=
-\sum_a \bar q(a)\log \bar q(a)
+
\mathrm{KL}\!\left(\bar q\,\|\,p\right).
$$

Since the KL divergence is non-negative and is zero only when $p=\bar q$, the unique minimizer is

$$
p^{\star}(\cdot\mid z)=\bar q(\cdot).
$$

Substituting this solution into the objective gives

$$
\mathbb{E}_{R\mid z}
\left[
\mathrm{KL}\!\left(
q(\cdot\mid z,R)\,\|\,\bar q
\right)
\right].
$$

This is precisely the conditional mutual information
$I(A;R\mid Z=z)$ under
$A\sim q(\cdot\mid z,R)$ and $R\sim P(\cdot\mid z)$.

Finally, entropy is concave, so Jensen's inequality gives

$$
H\!\left(
\mathbb{E}_{R\mid z}[q(\cdot\mid z,R)]
\right)
\geq
\mathbb{E}_{R\mid z}
\left[
H(q(\cdot\mid z,R))
\right],
$$

which proves the entropy statement.
\end{proof}

\paragraph{Interpretation and scope.}
The proposition gives the behaviour of the \emph{unclipped} forward-KL objective when the student cannot condition on the privileged context. The student cannot in general reproduce a teacher whose distribution changes with $R$. Instead, the optimal student matches the average teacher distribution across contexts compatible with its observations. The residual loss is therefore $I(A;R\mid Z)$, which is zero only when the teacher's next-token distribution contains no additional information about $R$ given $Z$.

The entropy inequality gives a related consequence. The mixture that the student can represent is at least as entropic as the corresponding teacher distributions on average. This provides a simple explanation for why supervising a student with a teacher that depends on inaccessible information can produce a less concentrated student.

The proposition should not be read as an exact characterization of the implemented OPSD objective, since OPSD applies the per-entry clipping defined in Eq.~\ref{eq:opsd}. Instead, it isolates the privileged-information effect under the underlying forward-KL objective. The empirical recovery gap $\Delta$ in Section~\ref{sec:advantage} provides an outcome-level measurement of the same phenomenon. %

There is also a subtlety concerning the distribution $P(r\mid z)$. In our setting, the reference solution is associated deterministically with the problem, so a sufficiently capable student could, in principle, infer information about $r$ from $x$. The proposition concerns the regime in which the privileged context contains information that is not recovered by the student's observable state. This is precisely the regime probed empirically. At failed prefixes, the gold-context teacher recovers the correct answer substantially more often than the student without the reference (Table~\ref{tab:recovery}).

\subsubsection{Proposition~\ref{prop:clip}: the per-entry clip can reverse an update}
\label{app:clip-proof}

\begin{proposition*}
Let $p=\mathrm{softmax}(z)$, let

$$
e_a=q_a\log\frac{q_a}{p_a},
$$

and define

$$
U=\{a:e_a\leq\kappa\}.
$$

For the clipped objective

$$
\ell(z)=\sum_a\min(e_a,\kappa),
$$

and away from the clipping boundary, the gradient with respect to the student
logit $z_j$ is

$$
\frac{\partial\ell}{\partial z_j}
=
p_j\sum_{a\in U}q_a
-
q_j\mathbf{1}[j\in U].
$$

In particular, if $j\notin U$, then

$$
\frac{\partial\ell}{\partial z_j}
=
p_j\sum_{a\in U}q_a>0,
$$
so a gradient-descent step decreases the logit $z_j$. Because every logit is updated simultaneously, this does not by itself imply that $p_j$ decreases.
\end{proposition*}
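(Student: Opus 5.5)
The argument is a direct differentiation of the clipped objective, organised by splitting the vocabulary into the clipped and unclipped index sets.

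\emph{Setup.} We work away from the clipping boundary, i.e.\ assume no $a$ has $e_a=\kappa$ exactly. Each $e_a$ is a continuous function of $z$, so the partition $\{U, U^c\}$ is locally constant in a neighbourhood of $z$. On that neighbourhood
\[
\ell(z)=\sum_{a\in U}e_a+|U^c|\,\kappa .
\]
The second term is constant and contributes nothing to the gradient. The whole computation therefore reduces to differentiating $\sum_{a\in U}e_a$ with $U$ held fixed.

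\emph{Derivative of a single entry.} Write $e_a=q_a\log q_a-q_a\log p_a$. The teacher $q$ is frozen (it is evaluated with $\theta_0$ and does not depend on the student logits), so only the second term varies:
\[
\frac{\partial e_a}{\partial z_j}=-q_a\,\frac{\partial \log p_a}{\partial z_j}.
\]
The standard softmax identity gives $\partial\log p_a/\partial z_j=\mathbf 1[a=j]-p_j$. Hence
\[
\frac{\partial e_a}{\partial z_j}=-q_a\mathbf 1[a=j]+q_a p_j .
\]

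\emph{Summing over $U$.} Summing the entry-wise derivative over $a\in U$ yields
\[
\frac{\partial\ell}{\partial z_j}=p_j\sum_{a\in U}q_a-q_j\,\mathbf 1[j\in U],
\]
which is the claimed formula.

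\emph{The case $j\notin U$.} The indicator term vanishes, leaving $\partial\ell/\partial z_j=p_j\sum_{a\in U}q_a$. Strict positivity needs two facts.
\begin{itemize}
\item $p_j>0$, which holds because $p$ is a softmax.
\item $U$ carries positive teacher mass, i.e.\ $\sum_{a\in U}q_a>0$.
\end{itemize}

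\emph{Main obstacle: showing $\sum_{a\in U}q_a>0$.} This is the only step that is not mechanical. Any $a$ with $q_a=0$ has $e_a=0\le\kappa$ (using $0\log 0=0$), so it lies in $U$, but it contributes nothing to the sum. What is needed is some $a$ with $q_a>0$ and $e_a\le\kappa$. The plan is a contradiction argument:
\begin{enumerate}
\item Suppose every $a$ in the support of $q$ has $e_a>\kappa>0$.
\item Then $q_a>p_a$ for every $a$ in the support of $q$.
\item Summing over the support gives $1=\sum_a q_a>\sum_{a\in\mathrm{supp}\,q}p_a$.
\item Since $p$ has full support, there is mass of $p$ outside $\mathrm{supp}\,q$, so this inequality is not itself a contradiction. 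The argument therefore has to continue.
\end{enumerate}
If the contradiction cannot be closed in full generality, the fallback is to state the (mild) assumption $\sum_{a\in U}q_a>0$ explicitly as a nondegeneracy condition. This matches the stated regime of interest, where clipping affects only a minority of tokens, so $U$ retains most of the teacher's mass.

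\emph{Interpretation.} With positivity in hand, a gradient-descent step changes $z_j$ by $-\eta\,\partial\ell/\partial z_j<0$, so the logit of a clipped token decreases. The closing caveat of the statement follows because $p_j$ depends on all the logits through the softmax normaliser. If other logits fall by more than $z_j$, then $p_j$ can still rise, so the statement is made about $z_j$ rather than about $p_j$.
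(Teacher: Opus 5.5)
\textbf{The gradient formula.} Your differentiation is the same as the paper's proof: $U$ is locally constant away from the clipping boundary, you use $\partial\log p_a/\partial z_j=\mathbf{1}[a=j]-p_j$, and you sum over the unclipped entries. That part is fine.

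\textbf{The gap: strict positivity for $j\notin U$.} You correctly isolate this step, which the paper's proof simply asserts, but you do not close it. Your step 4 assumes that $\mathrm{supp}\,q$ is a proper subset of the vocabulary. The missing observation is that $q$ is the teacher's temperature-$T$ softmax, so it has full support. With that, your contradiction finishes at once: $1=\sum_a q_a>\sum_a p_a=1$. A direct version also works. Since $p$ and $q$ both sum to one, some $a$ has $q_a\le p_a$. Then $e_a\le 0<\kappa$, so $a\in U$ and $\sum_{a\in U}q_a\ge q_a>0$.

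\textbf{Why an extra hypothesis is genuinely needed.} You are right that the statement as written, which never says $q$ has full support, needs something more. Take two tokens with $q=(1,0)$, $p=(0.1,0.9)$ and $\kappa=0.05$. Then $e_1=\log 10>\kappa$ and $e_2=0$, so $U=\{2\}$ and $\sum_{a\in U}q_a=0$. Hence $\partial\ell/\partial z_1=0$, and in fact $\ell\equiv\kappa$ near $z$.

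\textbf{Your proposed justification does not work.} The correct nondegeneracy condition is full support of $q$, or at minimum one unclipped token with $q_a>0$. The empirical fact that ``clipping affects a minority of tokens'' refers to sequence positions, not vocabulary entries. At a single position the clipped entries can carry most of $q$'s mass; in the paper's own example the clipped entry has $q_3=0.8$. A proposition's proof also should not rest on observed frequencies.
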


\begin{proof}
Away from the clipping boundary, an entry contributes to the gradient only
when $e_a\leq\kappa$. For such an entry,

$$
e_a
=
q_a\log q_a-q_a\log p_a.
$$

Since

$$
\frac{\partial\log p_a}{\partial z_j}
=
\mathbf{1}[a=j]-p_j,
$$

we obtain

$$
\frac{\partial e_a}{\partial z_j}
=
-q_a
\left(
\mathbf{1}[a=j]-p_j
\right).
$$

Summing over the unclipped entries therefore gives

$$
\begin{aligned}
\frac{\partial\ell}{\partial z_j}
&=
\sum_{a\in U}
-q_a
\left(
\mathbf{1}[a=j]-p_j
\right)\\
&=
-q_j\mathbf{1}[j\in U]
+
p_j\sum_{a\in U}q_a.
\end{aligned}
$$

If $j\notin U$, the first term is absent and

$$
\frac{\partial\ell}{\partial z_j}
=
p_j\sum_{a\in U}q_a>0.
$$

Gradient descent consequently decreases $z_j$. The probability $p_j$ need not decrease, because the other logits change in the same step. For example, with $p=(0.8,0.1,0.1)$, $q=(0.1,0.1,0.8)$ and $\kappa=0.05$, only the third entry is clipped, and a small gradient step lowers $z_3$ while raising $p_3$.
\end{proof}

\paragraph{Interpretation.}
The clipping effect is different from the privileged-information effect in Proposition~\ref{prop:mixture}. It arises from the form of the implemented objective itself. Since

$$
e_a>\kappa>0
\quad\Longrightarrow\quad
q_a>p_a,
$$

a clipped coordinate is one for which the teacher assigns more probability than the student. Without clipping, the usual forward-KL gradient would apply upward pressure to such a token. Once that coordinate is clipped, its direct
gradient contribution is removed, while the normalization term from the remaining coordinates remains. The resulting gradient on that logit therefore points in the opposite direction.

This provides a possible explanation for the increasing drift from the teacher observed during training. As the student moves away from the teacher, more entries satisfy the clipping condition, which can in turn create additional
drift. We therefore analyse clipping separately from the privileged-information gap and keep the same clipping rule for OPSD and OASIS throughout the experiments.

The derivation was checked numerically using central finite differences on random $(z,q)$ pairs, with agreement within $10^{-5}$. Figure~\ref{fig:clipping} quantifies the prevalence of the clipping effect in our experiments.

\begin{figure}[t]
    \centering
    \includegraphics[width=0.85\textwidth]{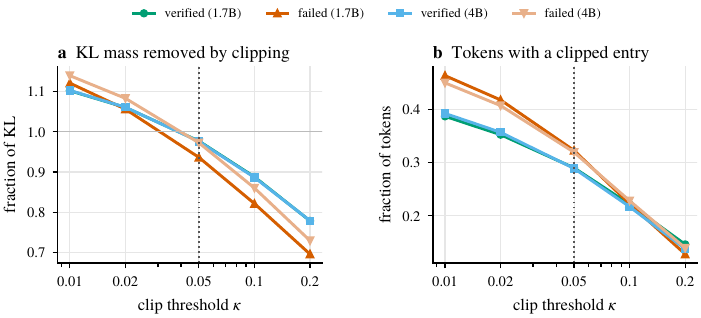}
    \caption{
    \textbf{Effect of the per-entry clip.}
    \textbf{(a)} Fraction of the positive KL mass removed by the clip. Values above one indicate that the clipped terms exceed the total positive KL mass, resulting in a negative value of the clipped objective.
    \textbf{(b)} Fraction of tokens with at least one clipped vocabulary entry. The dotted line marks $\kappa=0.05$, used by OPSD and all experiments in this work. Proposition~\ref{prop:clip} shows that clipped entries lose the teacher's upward pull and receive a downward logit gradient.
    }
    \label{fig:clipping}
\end{figure}

\begin{table}[t]
\centering
\caption{\textbf{Training composition and dynamics.} Statistics over 100 optimizer steps with 64 problems per step and $K=8$ rollouts per problem for OASIS. ``Problems receiving gradient'' is the fraction of candidate problems with at least one verified rollout. ``Selected scaffolds reaching the answer'' is computed only among the scaffolds selected for supervision, while ``Candidate rollouts hitting the length cap'' is computed over all generated candidate rollouts. OASIS supervises fewer tokens because problems without a verified rollout are excluded and selected scaffolds are shorter on average, while its additional cost comes from rollout generation. Entropy, teacher--student top-1 agreement, and clipped-token fraction are measured on training batches, with early and late values computed over steps 2--34 and 68--100, respectively.}
\label{tab:budget}
\small
\resizebox{14cm}{!}{
\begin{tabular}{lcccc}
\toprule
& \multicolumn{2}{c}{\textbf{Qwen3-1.7B}} & \multicolumn{2}{c}{\textbf{Qwen3-4B}} \\
\cmidrule(lr){2-3}\cmidrule(lr){4-5}
& \textbf{OPSD} & \textbf{OASIS} & \textbf{OPSD} & \textbf{OASIS} \\
\midrule
Problems receiving gradient               & 100\% & 51.7\% & 100\% & 50.8\% \\
Selected scaffolds reaching the answer    & 28.3\% & 100\% & 30.6\% & 100\% \\
Candidate rollouts hitting the length cap & 62.0\% & 58.0\% & 64.4\% & 63.1\% \\
Mean supervised length (tokens)           & 859 & 603 & 883 & 646 \\
Generated tokens                          & 2.76M & 43.5M & 2.80M & 44.8M \\
Supervised tokens                         & 2.76M & 1.98M & 2.80M & 2.07M \\
\midrule
Student entropy (early $\to$ late)         & 0.40 $\to$ 0.70 & 0.25 $\to$ 0.49 & 0.35 $\to$ 0.69 & 0.28 $\to$ 0.67 \\
Teacher--student top-1 (early $\to$ late) & 0.932 $\to$ 0.888 & 0.929 $\to$ 0.931 & 0.938 $\to$ 0.882 & 0.926 $\to$ 0.853 \\
Clipped tokens (early $\to$ late)          & 0.31 $\to$ 0.44 & 0.25 $\to$ 0.35 & 0.28 $\to$ 0.44 & 0.29 $\to$ 0.51 \\
\bottomrule
\end{tabular}}
\end{table}

\begin{table}[t]
\centering
\caption{\textbf{Teacher signal at verified and failed states.} Measurements use the gold reference on 200 rollouts per condition for Qwen3-1.7B. ``Sampled'' denotes the token actually produced by the student at the supervised state.}
\label{tab:states}
\small
\begin{tabular}{lcc}
\toprule
\rowcolor{headergray}
\textbf{Quantity at the supervised state}
& \textbf{Verified rollouts}
& \textbf{Failed rollouts} \\
\midrule
Share of OPSD's training scaffolds        & 28.3\% & 71.7\% \\
KL$(q_t \,\|\, p_t)$ per token            & 0.274 & 0.181 \\
Teacher entropy                           & 0.297 & 0.461 \\
Teacher probability on the sampled token  & 0.834 & 0.775 \\
Teacher top-1 $=$ sampled token           & 0.862 & 0.814 \\
Tokens with a clipped entry ($\kappa=0.05$) & 0.288 & 0.323 \\
\bottomrule
\end{tabular}
\end{table}

\subsection{Probe protocols}
\label{app:probes}

All probes are inference-only and use the frozen initial policy $\pi_{\theta_0}$, which is also the teacher used during training. Scaffolds are taken from the generation logs of the corresponding training runs. Unless otherwise stated, we use 200 scaffolds per cell, sampled without replacement with at most one scaffold per problem. We restrict the probes to scaffolds generated during the first 40 optimizer steps, keeping the generating policy
close to the initial policy used by the frozen teacher.

\paragraph{Prompts and contexts.}
Student and teacher prompts are reconstructed exactly as in training, including the thinking-mode setting. We evaluate the following reference context conditions: 
(1)~\textbf{Gold}: the dataset's written solution. 
(2)~\textbf{Own failed rollout}: the failed rollout associated with the same training example and used as the OASIS context. 
(3)~\textbf{Own correct rollout}: a verified rollout from the same problem, used when all sampled rollouts are correct. 
(4)~\textbf{Unrelated}: the written solution of a different problem. 
(5)~\textbf{Shuffled gold}: the gold solution with its lines randomly permuted. 
(6)~\textbf{Empty}: the reference block and its instruction with no reference content. 
(7)~\textbf{Template only}: the teacher prompt with the thinking-mode setting but without a reference block. 
(8)~\textbf{Answer only}: the string \texttt{The final answer is \textbackslash boxed\{a\}}. and 
(9)~\textbf{Gold, answer removed}: the gold solution with boxed answer payloads and the final paragraph removed when it still states the answer.\footnote{Because the answer may occur elsewhere in a derivation, we also report the subset in which the answer string does not reappear (marked with $\star$ in Table~\ref{tab:recovery}).}
\paragraph{Signal probe.}
For each scaffold, we evaluate the student distribution $p_t$ and the teacher distribution $q_t$ at every scaffold position for each context, using the distillation temperature $T=1.1$. We measure the total-variation distance from the gold-context teacher, top-1 agreement, the teacher--student $\mathrm{KL}(q_t|p_t)$ before and after clipping, teacher entropy, and the teacher probability assigned to the token sampled by the student. For position-dependent analyses, quantities are aggregated both by relative position within a scaffold and by absolute token position. Reported means are computed per scaffold, with 95\% bootstrap intervals over scaffolds.

The decomposition in Figure~\ref{fig:signal} is a sequential attribution of the teacher--student KL gap. Starting from the student prompt, we successively add the thinking-mode setting, the reference block and its instruction, and finally the reference content. Each increment is reported as a fraction of the full gap. Because sequential attribution depends on the order of the additions, we also computed the corresponding decomposition using total variation. It gives the same qualitative ordering.

\paragraph{Recovery probe.}
For each scaffold, we retain the first
$f\in\{0.1,0.3,0.5,0.7,0.9\}$ fraction of its tokens and use the resulting prefix as the starting state for continuation. For each context, we sample four continuations at temperature $1.0$, top-$p=0.95$, top-$k=20$, with a maximum of 2048 new tokens.

A continuation is counted as a recovery if the last boxed answer in the prefix and continuation matches the ground-truth answer. We additionally record whether the continuation reproduces the scaffold's original wrong answer and whether it produces no answer before reaching the token limit.

Recovery rates are first averaged over continuations for each problem and then over problems, with 95\% bootstrap intervals. We exclude $f=0$ because a thinking-mode continuation without a prefix frequently reaches the token limit before producing an answer, making it poorly comparable with the other cut fractions.

Failed scaffolds include both trajectories ending in an incorrect answer and trajectories that reach the token limit without producing an answer. An additional analysis restricted to scaffolds that end with an incorrect answer is reported in Appendix~\ref{app:ablations}.

%\paragraph{Probe limitations.}
%The probes have three limitations. First, scaffolds are reconstructed from decoded text and re-tokenised, which can differ from the original sampled token sequence in rare cases. Second, the probes use the frozen initial policy as the teacher while the scaffolds are generated by a partially trained student. We therefore restrict the main analysis to early training steps. Third, the recovery gap $\Delta$ measures the difference in outcomes when the teacher or student continues from the same prefix, whereas the training objective evaluates teacher distributions at fixed scaffold states. Thus, $\Delta$ is anoutcome-level proxy for the privileged information available to the teacher, rather than a direct estimate of the token-level quantity $I(A;R\mid Z)$.

\begin{figure}[t]
    \centering
    \includegraphics[width=0.85\textwidth]{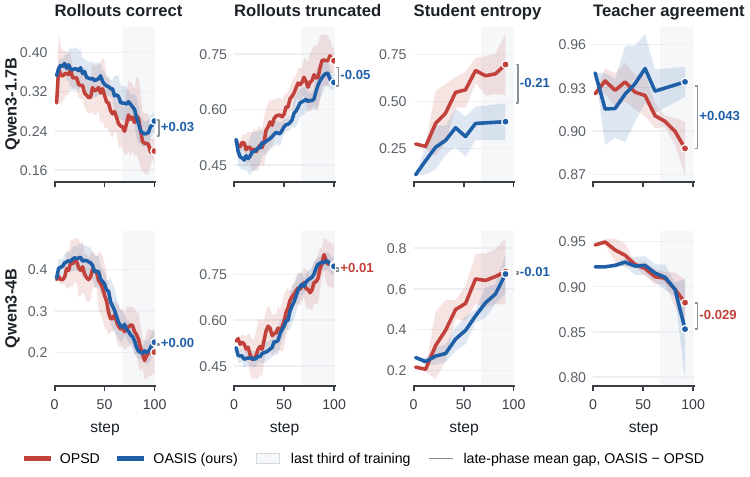}
    \caption{\textbf{Training dynamics at two scales (rows) for four metrics (columns).} Lines represent trailing means with $\pm 1$ standard deviation bands. Brackets report the OASIS minus OPSD gap averaged over the shaded final third of training. At 1.7B, OASIS maintains a higher correct rate, less truncation, lower student entropy, and stable teacher agreement, whereas OPSD drifts. At 4B, both methods track each other for most of training and drift near the end, which aligns with the clipping effect described in Proposition~\ref{prop:clip}. Metrics are computed on each method’s own training batches. Differences therefore reflect both changes in model behavior and changes in the selected trajectory distribution.}
    \label{fig:dynamics}
\end{figure}

\subsection{Clipping}
\label{app:clip}

\begin{proposition}[Clipping Removes the Teacher's Pull]
\label{prop:clip}
Let $p, q \in \Delta^{|V|-1}$ denote the student and teacher probability distributions over vocabulary $V$, and let $e_a = q_a \log(q_a / p_a)$ represent the per-entry KL contribution. Define $U = \{a \in V : e_a \leq \kappa\}$ as the set of unclipped tokens. For the clipped OPSD objective $\mathcal{L} = \sum_{a \in V} \min(e_a, \kappa)$, the gradient with respect to the student logit $z_j$ is:
\begin{equation}
g_j = \frac{\partial \mathcal{L}}{\partial z_j} = p_j \sum_{a \in U} q_a - q_j \cdot \mathbf{1}[j \in U].
\end{equation}
Consequently, for any clipped token $j \notin U$, the gradient simplifies to $g_j = p_j \sum_{a \in U} q_a > 0$. Since $e_j > \kappa > 0$ implies $q_j > p_j$, the unclipped forward-KL gradient $p_j - q_j$ would be negative and raise $z_j$. Under clipping the sign is reversed, so a gradient-descent step ($\Delta z_j = -\eta g_j$) lowers $z_j$, regardless of how strongly the teacher preferred token $j$.
\end{proposition}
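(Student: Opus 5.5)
The plan is a direct calculation with the softmax Jacobian, done coordinatewise and restricted to points away from the clipping boundary. The first step is to fix $p=\mathrm{softmax}(z)$ and assume no $a$ satisfies $e_a=\kappa$ exactly. The map $z\mapsto e_a(z)$ is continuous, so every index stays on the same side of $\kappa$ in a neighbourhood of $z$. There the set $U$ is locally constant and $\mathcal{L}$ is differentiable. Locally we can write
\[
\mathcal{L}=\sum_{a\in U}e_a+\kappa\,|V\setminus U|,
\]
and the second term has zero gradient. So only the unclipped entries contribute to $g_j$.

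Next, for $a\in U$, write $e_a=q_a\log q_a-q_a\log p_a$. The teacher $q$ is fixed, since it comes from the frozen $\pi_{\theta_0}$. The only dependence on $z$ is therefore through $\log p_a$. The standard identity $\partial\log p_a/\partial z_j=\mathbf{1}[a=j]-p_j$ gives
\[
\frac{\partial e_a}{\partial z_j}=-q_a\bigl(\mathbf{1}[a=j]-p_j\bigr).
\]
Summing over $a\in U$ gives $g_j=p_j\sum_{a\in U}q_a-q_j\mathbf{1}[j\in U]$, which is the claimed formula.

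For the consequences, take $j\notin U$. The indicator term vanishes and $g_j=p_j\sum_{a\in U}q_a$.

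To show $g_j>0$ we need two facts. First, $p_j>0$ holds automatically under softmax. Second, we need $\sum_{a\in U}q_a>0$, that is, some unclipped token with positive teacher mass.

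For $e_j>\kappa>0$ implying $q_j>p_j$: positivity of $q_j\log(q_j/p_j)$ forces $q_j>0$ and $\log(q_j/p_j)>0$. In the unclipped case the same computation with $U=V$ yields $p_j-q_j<0$. Gradient descent with $g_j>0$ then lowers $z_j$, which reverses the sign relative to unclipped forward KL. This holds for any magnitude of $q_j$, because $g_j$ does not depend on $q_j$ once $j$ is clipped.

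\textbf{Main obstacle: strict positivity of $\sum_{a\in U}q_a$.} This sum could in principle vanish if every token with $q_a>0$ were clipped. I would try to show some $a$ has $q_a>0$ and $e_a\le\kappa$, arguing first by total-mass considerations. Since $\sum_a q_a=\sum_a p_a=1$ and every clipped token has $q_a>p_a$, not all of the mass of $q$ can sit on clipped tokens with strict excess. Some token must then have $q_a\le p_a$, so $e_a\le 0\le\kappa$, and that token lies in $U$. The remaining worry is that this token might have $q_a=0$. If that happens, I would state the mild nondegeneracy condition $\sum_{a\in U}q_a>0$ explicitly; in practice it always holds, because temperature-softened softmax teachers have full support, so $q_a>0$ for all $a$. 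I would also remark that the statement concerns the logit $z_j$, not $p_j$, consistent with the appendix example.
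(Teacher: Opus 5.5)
Your proposal is correct and follows the paper's proof essentially line for line: away from the clipping boundary you treat $U$ as locally constant, use $\partial\log p_a/\partial z_j=\mathbf{1}[a=j]-p_j$, and sum over the unclipped entries. Your handling of $\sum_{a\in U}q_a>0$ is more careful than the paper's, which simply asserts the strict inequality. Your mass argument shows $U$ always contains a token with $q_a\le p_a$, so full support of the temperature-softmax teacher suffices. The caveat is genuinely needed: for example, $q=(1,0)$, $p=(1/2,1/2)$, $\kappa=0.05$ gives $g_1=0$.
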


The proof is given in Appendix~\ref{app:clip-proof}. Clipping therefore does not merely freeze a token's update. Whenever an entry's KL contribution exceeds $\kappa$, the teacher's upward pull on that token is removed and the sign of its logit gradient is reversed, so the student receives no signal toward tokens the teacher prefers most strongly. Because all logits move together, the token's probability need not fall, but it is no longer pushed toward the teacher.

Figure~\ref{fig:clipping} shows the fraction of
positive KL mass removed by the per-entry clip as a function of $\kappa$, measured on verified and failed scaffolds. At the value used by OPSD and throughout our
experiments, $\kappa=0.05$, the clip removes 97\% of the positive mass on verified scaffolds and 94\% on failed scaffolds. Approximately three tokens in ten have at least one clipped entry. Because the clipped sum can become negative, the reported training loss is negative from the beginning of training.

Together with Proposition~\ref{prop:clip}, these results provide a possible explanation for the late-training drift observed in both methods. Clipped entries lose the teacher's upward pull, which can move the student further from the teacher and causes more entries to cross the clipping threshold. The clipped fraction increases from 0.31 to 0.44 for OPSD and from 0.25 to 0.35 for OASIS at 1.7B over 100 steps.  The analysis nevertheless suggests that a larger threshold, or clipping the per-token divergence rather than individual vocabulary entries, may reduce this effect.

\begin{figure}[t]
    \centering
    \includegraphics[width=0.85\textwidth]{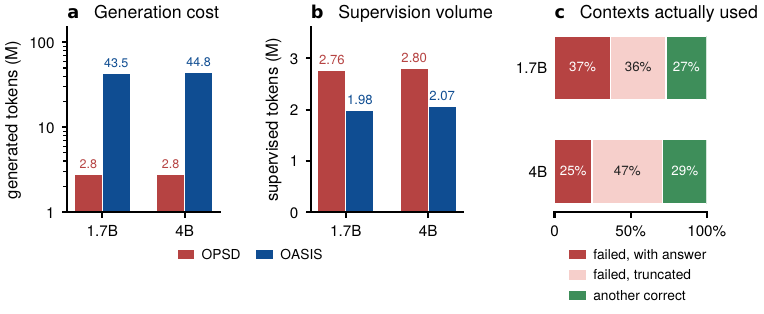}
    \caption{
    \textbf{Token budget over 100 optimizer steps.}
    \textbf{(a)} Total tokens generated by OPSD and OASIS, shown on a logarithmic scale. OASIS generates roughly sixteen times more tokens because it samples $K=8$ rollouts per problem. \textbf{(b)} Tokens receiving distillation supervision. OASIS supervises about 30\% fewer tokens because problems without a verified rollout are excluded and verified scaffolds are shorter on average. \textbf{(c)} Teacher contexts used by OASIS: a failed rollout with a wrong answer, a failed rollout truncated at the token limit, or another verified rollout when every sampled rollout is correct.
    }
    \label{fig:budget}
\end{figure}

\subsection{Ablations}
\label{app:ablations}

Both ablations evaluate Qwen3-1.7B and vary a single component of the primary OASIS configuration. The default setup employs the shortest verified scaffold with $K=8$ rollouts and is denoted by $\dagger$ in both tables.

\paragraph{Scaffold selection rule.}
Verification identifies which problems receive supervision, whereas the scaffold selection rule determines which verified rollout serves as the supervision trajectory. We compare the shortest and longest verified rollouts while holding the set of supervised problems constant. Consequently, the two variants differ solely in the specific trajectory used for training: the longest rollout provides more supervised tokens per problem, while the shortest offers a more compact supervision path.

\begin{table}[ht]
\centering
\caption{Effect of scaffold selection on Qwen3-1.7B using AIME 2024. Both variants use the same verified rollouts and supervise the same set of problems. Results are reported as mean $\pm$ standard deviation over three evaluation seeds.}
\label{tab:ablation-selection}
\small
\begin{tabular}{lcccc}
\toprule
\rowcolor{headergray}
\textbf{Scaffold} & \textbf{Supervised length} & \textbf{Avg@12} & \textbf{Pass@12} & \textbf{Maj@12} \\
\midrule
Shortest verified$^{\dagger}$ & 603 & $54.17 \pm 0.58$ & $80.00 \pm 0.00$ & $68.88 \pm 1.57$ \\
Longest verified              & 909 & $52.33 \pm 0.51$ & $73.33 \pm 1.92$ & $60.00 \pm 1.92$ \\
\bottomrule
\end{tabular}
\end{table}

\paragraph{Number of rollouts.}
The number of sampled rollouts $K$ directly determines generation cost and the likelihood that a problem yields at least one verified trajectory. Lower values of $K$ reduce computational cost but risk leaving more problems without a valid scaffold. Higher values increase inference overhead while supplying more candidate trajectories for scaffold selection. We compare $K=4$ and $K=8$, reporting total generation cost, and downstream benchmark metrics.

\begin{table}[ht]
\centering
\caption{Effect of the number of rollouts on Qwen3-1.7B using AIME 2024. Generated tokens are measured over 100 optimizer steps. Results are reported as mean $\pm$ standard deviation over three evaluation seeds.}
\label{tab:ablation-k}
\small
\begin{tabular}{ccccc}
\toprule
\rowcolor{headergray}
$\boldsymbol{K}$ & \textbf{Generated tokens} &  \textbf{Avg@12} & \textbf{Pass@12} & \textbf{Maj@12} \\
\midrule
4             & 22.8M & $52.77 \pm 0.29$ & $76.67 \pm 0.33$ & $66.67 \pm 0.96$ \\
8$^{\dagger}$ & 43.5M &  $54.17 \pm 0.58$ & $80.00 \pm 0.00$ & $68.88 \pm 1.57$ \\
\bottomrule
\end{tabular}
\end{table}

\subsection{Implementation Details}
\label{app}

\begin{algorithm}[t]
\caption{One OASIS update}
\label{alg:oasis}
\begin{algorithmic}[1]
\Require problems $\{x_i\}$ with final answers $\{a_i\}$, policy $\pi_\theta$,
frozen teacher $\pi_{\theta_0}$, rollouts per problem $K$
\For{each problem $x_i$}
\State $\mathcal{Y}(x_i) \gets \{y^{(k)}\}_{k=1}^{K}$, \quad $y^{(k)} \sim \pi_\theta(\cdot \mid s(x_i))$
\State $\mathcal{Y}^{+}(x_i) \gets \{y : \text{answer}(y) = a_i\}$
\If{$\mathcal{Y}^{+}(x_i)=\emptyset$} mask $x_i$ \textbf{and continue} \Comment{rare all-masked-batch exception below}\EndIf
\State $y^{\star}_i \gets$ shortest element of $\mathcal{Y}^{+}(x_i)$ \Comment{Eq.~\ref{eq:shortest}}
\State $r_i \gets$ shortest element of $\mathcal{Y}(x_i)\setminus\mathcal{Y}^{+}(x_i)$,
else another verified rollout \Comment{Eq.~\ref{eq:context}}
\State $q_t \gets \pi_{\theta_0}(\cdot \mid u(x_i,r_i), y^{\star}_{i,<t})$
\State $p_t \gets \pi_{\theta}(\cdot \mid s(x_i), y^{\star}_{i,<t})$
\EndFor
\State update $\theta$ with Eq.~\ref{eq:opsd} over the unmasked problems
\end{algorithmic}
\end{algorithm}

Table~\ref{tab:hparams} lists the training configuration, which is identical across methods. Three additional details of the released OPSD implementation are relevant to the reported results and are therefore kept unchanged across all experiments.

\paragraph{Loss normalization.}
Equation~\ref{eq:opsd} averages over supervised tokens in the micro-batch rather than over sequences. Problems with no verified rollout contribute no tokens to either the numerator or denominator. A micro-batch in which a problem is masked therefore produces zero loss and zero gradient.

%\paragraph{Right padding and position indices.}Prompts are right-padded to the longest prompt in each micro-batch, and the completion is appended after the padding. No explicit position indices are passed, so padded positions still advance the rotary position index. Consequently, each completion begins $\text{gap}=\max_i |u_i|-|u_j|$ positions later than it would in an unpadded sequence. This behavior is inherited from the released implementation and is identical across all experimental arms. The gap becomes larger when reference lengths vary more within a micro-batch.  

\paragraph{Masking.}
A problem without a verified rollout is excluded from the loss. Its tokens receive no supervision, and the problem is excluded from the normalization in Eq.~\ref{eq:opsd}. The only exception occurs when all problems in a micro-batch are masked. In this case, rather than discarding the update, we retain one problem and distill along its first rollout, even though that rollout is unverified. This exception is rare. In our Qwen3-1.7B run, it occurred for 1.6\% of all problems encountered during training, corresponding to 3.0\% of the problems that contributed a gradient. In addition, a runtime check verifies for every micro-batch that masked problems contribute no supervised tokens and that every training scaffold is verified, except for the logged all-masked exceptions.

\paragraph{Compute and Hardware Resources.}
All experiments were conducted using 2$\times$ NVIDIA H100 GPUs and 8$\times$ NVIDIA A100 (80GB) GPUs. The generation of rollouts and teacher logit evaluations during the OASIS training pipeline were parallelized across the available GPU nodes.  

\begin{table}[t]
\centering
\small
\caption{\textbf{Training configuration.} Hyperparameters are identical across model sizes}
\label{tab:hparams}
\begin{tabular}{lccc}
\toprule
\rowcolor{headergray}
\textbf{Hyperparameter} & \textbf{Qwen3-1.7B} & \textbf{Qwen3-4B} & \textbf{Qwen3-8B} \\
\midrule
\multicolumn{4}{l}{\textit{Optimization \& Batching}} \\
\hspace{0.5em} Optimizer steps               & 100 & 100 & 100 \\
\hspace{0.5em} Problems per step             & 64  & 64  & 64  \\
\hspace{0.5em} Problems trained per step     & $\leq 32$ & $\leq 32$ & $\leq 32$ \\
\hspace{0.5em} Rollouts per problem ($K$, OASIS) & 8  & 8   & 8   \\
\hspace{0.5em} Learning rate                 & \multicolumn{3}{c}{$5\times10^{-6}$, linear decay, gradient clip $\|g\| \leq 0.1$} \\
\hspace{0.5em} Clip constant $\kappa$        & \multicolumn{3}{c}{$0.05$, per vocabulary entry} \\
\cmidrule(lr){1-4}
\multicolumn{4}{l}{\textit{Model Architecture \& Distillation}} \\
\hspace{0.5em} LoRA rank / $\alpha$           & \multicolumn{3}{c}{$64 \:/\: 128$, all attention and MLP projections} \\
\hspace{0.5em} Distillation temperature      & \multicolumn{3}{c}{$1.1$} \\
\hspace{0.5em} Teacher                       & \multicolumn{3}{c}{Initial policy $\pi_{\theta_0}$ (frozen, LoRA disabled)} \\
\hspace{0.5em} Student / Teacher template    & \multicolumn{3}{c}{Thinking disabled / Thinking enabled} \\
\cmidrule(lr){1-4}
\multicolumn{4}{l}{\textit{Generation \& Evaluation}} \\
\hspace{0.5em} Rollout sampling               & \multicolumn{3}{c}{$T = 1.1$, top-$p = 0.95$, top-$k = 20$, $\leq 1024$ new tokens} \\
\hspace{0.5em} Evaluation metrics             & \multicolumn{3}{c}{\text{Avg}@12, $T = 1.0$, top-$p = 0.95$, $\leq 38{,}912$ tokens} \\
\bottomrule
\end{tabular}
\end{table}

\end{document}